\DeclareMathAlphabet{\mathpzc}{OT1}{pzc}{m}{it}
\newtheorem{claim}{Claim}[section]
\newtheorem{lemma}[claim]{Lemma}
\newtheorem{assumption}{Assumption}
\newtheorem{theorem}{Theorem}
\newtheorem{proposition}[claim]{Proposition}
\def\<{\langle}
\def\>{\rangle}
\def\G{{\mathcal G}}
\def\eps{{\varepsilon}}
\def\id{{\rm I}}
\def\sT{{\sf T}}
\def\proj{{\sf P}^{\perp}}
\def\conv{{\rm conv}}
\def\cconv{\overline{\rm conv}}
\def\cH{{\cal H}}
\def\inv{{-1}}
\def\P{{\mathbb P}}
\def\prob{{\mathbb P}}
\def\E{{\mathbb E}} 
\def\Var{{\textup{Var}}}
\def\reals{\mathbb{R}}
\def\normal{{\sf N}}
\def\Cov{{\rm Cov}}
\def\Var{{\rm Var}}
\def\Tr{{\sf {Tr}}}
\def\tr{{\sf {Tr}}}
\def\hth{\hat{\theta}}
\def\htheta{\hat{\theta}}
\def\bbeta{\bar{\beta}}
\def\xtil{\tilde{x}}
\def\Ball{{\sf Ball}}
\def\ropt{r^{\rm opt}}
\def\OurStrategy{{\sc SmoothExplore }}
\def\BallStrategy{{\sc BallExplore }}
\def\cG{\mathcal{G}}
\def\d{{\mathrm{d}}}
\def\Var{{\rm Var}}
\def\ind{\mathbb{I}}
\newcommand\norm[1]{\lVert{#1}\rVert}
\newcommand\myeqref[1]{{Eq.\,\eqref{#1}}}
\def\F{{\mathcal F}}
\def\cX{{\cal X}}
\def\cF{{\cal F}}
\def\mle{{\, \preceq \,}}
\def\mge{{\, \succeq \,}}
\begin{document}

\title{Linear Bandits in High Dimension and Recommendation Systems}
\author{Yash~Deshpande\thanks{Y.~Deshpande is with the Department of Electrical 
Engineering, Stanford University}
~and~Andrea~Montanari\thanks{A.~Montanari is with the Departments of Electrical 
Engineering and Statistics, Stanford University}}

\maketitle

\begin{abstract}
A large number of online services provide automated recommendations to 
help users to navigate through a large collection of items. New items
(products, videos, songs, advertisements) are suggested on the basis of the
user's past history and --when available-- her demographic
profile. Recommendations have to satisfy the dual goal of helping the
user to explore the space of available items, while allowing the
system to probe the user's preferences.

We model this trade-off using linearly parametrized multi-armed
bandits, propose a policy and prove upper and lower bounds on the cumulative ``reward''
that coincide up to constants in the data poor (high-dimensional) regime. 
Prior work on linear bandits has focused on the data rich (low-dimensional)
regime and used cumulative ``risk'' as the figure of merit. For this data rich regime, we 
provide a simple modification for our policy that achieves near-optimal
risk performance under more restrictive assumptions on the geometry of the problem. 
We test (a variation of) the scheme used for
establishing achievability on the Netflix and MovieLens datasets and obtain good
agreement with the qualitative predictions of the theory we develop.
\end{abstract}

\section{Introduction}

Recommendation systems are a key technology  for navigating through
the ever-growing amount of data that is available on the Internet
(products, videos, songs, scientific papers, and so on). Recommended  items are chosen on the
basis of the user's past history and have to strike the right balance
between two competing objectives: 
\begin{description}
\item[Serendipity] i.e. allowing accidental pleasant discoveries.
This has a positive --albeit hard to quantify-- impact on user
experience, in that it naturally limits the recommendations  monotony.
It also has a quantifiable positive impact on the systems, by
providing fresh independent information about the user preferences.
\item[Relevance] i.e. determining recommendations which are most
  valued by the user, given her past choices.
\end{description}
While this trade-off is well understood by practitioners, as well as
in the data mining literature \cite{ScheinColdStart,ZhangDiversity,SlaneyDiversity}, rigorous and mathematical work
has largely focused on the second objective 
\cite{SJ03,SRJ05,CaR08,Gross09,CandesTaoMatrix,KMO09,KMO09noise,KoltchinskiiMatrixCompletion}. 
In this paper we address the first objective, building on recent work on linearly
parametrized bandits \cite{Dani08,RusTsi10,AbSze11}.

In a simple model, the system recommends items $i(1), i(2), i(3),\dots$
sequentially at times $t\in\{1,2,3,\dots\}$. The item index at time
$t$ is selected from a large set $i(t)\in [M]\equiv\{1,\dots,M\}$. Upon viewing (or reading,
buying, etc.) item $i(t)$, the user provides feedback $y_t$ to the
system. The feedback can be explicit, e.g. a one-to-five-stars rating,
or implicit, e.g. the fraction of a video's duration effectively
watched by the user. We will assume that $y_t\in\reals$, 
although more general types of feedback  also play an important role
in practice, and mapping them to real values is sometimes non-trivial.

A large body of literature has developed statistical methods to
predict the feedback that a user will provide on a specific item, given
past data concerning the same and other users (see the 
references above). A particularly
successful approach uses `low rank'  or `latent space' models.
These models postulate that the rating $y_{i,u}$ provided by user $u$ on
item $i$ is approximately given by the scalar product of two
feature vectors $\theta_u$ and  $x_i\in\reals^p$ characterizing,
respectively, the user and the item. In formulae
\begin{align*}
y_{i,u} = \<x_i,\theta_u\> + z_{i,u}\, ,
\end{align*}
where $\<a,b\> \equiv \sum_{i=1}^p a_ib_i$ denotes the standard scalar
product, and $z_{i,u}$ captures unexplained factors. The resulting
matrix of ratings $y = (y_{i,u})$ is well-approximated by a rank-$p$
matrix.  

The items feature vectors $x_i$ can be either constructed explicitly,
or derived from users' feedback using matrix factorization
methods. Throughout this paper we will assume that they have been
computed in advance using either one of these methods and are hence
given. We will use the shorthand $x_t=x_{i(t)}$ for the feature vector
of the item recommended at time $t$.

Since the items' feature  vectors are known in advance, distinct users
can be treated independently, and we will hereafter focus on a single users, with
feature vector $\theta$.
The vector $\theta$ can encode demographic
information known in advance or be computed from the user's feedback.
While the model can easily incorporate the former, we will focus on
the most interesting case in which no information is known in advance.

We are therefore led to consider the linear bandit model
\begin{align}
	y_t = \< x_t, \theta \> + z_t \, ,\label{eq:GeneralModel}
\end{align}
where, for simplicity, we will assume $z_t \sim \normal(0, \sigma^2)$
independent of $\theta$, $\{x_i\}_{i = 1}^t$ and
$\{z_i\}_{i = 1}^{t-1}$. At each time $t$, the recommender is given to
choose a item feature vector $x_t\in\cX_p\subseteq\reals^p$, with
$\cX_p$ the set of feature vectors of the available items. A recommendation policy
is a sequence of random variables $\{x_t\}_{t\ge 1}$, $x_t\in\cX_p$
wherein $x_{t+1}$ is a function of the past history $\{y_\ell,x_\ell\}_{1\le \ell\le
t}$  (technically, $x_{t+1}$ has to be measurable on $\cF_t \equiv \sigma(\{y_\ell,x_\ell\}_{\ell = 1}%
  ^t)$).
The system is rewarded at time $t$ by an amount equal to the user
appreciation  $y_t$, and we let $r_t$ denote the expected reward,
i.e. $r_t\equiv \E(\<x_t, \theta\>)$. 

As mentioned above, the same linear bandit problem was already studied
in several papers, most notably by Rusmevichientong and Tsitsiklis
\cite{RusTsi10}. The theory developed in that work, however, has two
limitations that are important in the context of recommendation systems. First, the
main objective of \cite{RusTsi10} is to construct policies with nearly optimal
`regret', and the focus is on the asymptotic behavior for $t$ large
with $p$ constant. In this limit the regret per unit time goes to $0$.
In a recommendation
system, typical dimensions $p$ of the latent feature vector are about 20
to 50 \cite{BK07long,Kor08long,KBV09}. 
If the vector $x_i$  include explicitly
constructed features, $p$ can easily become easily much larger.
 As a consequence, existing theory requires at least
$t\gtrsim 100$ ratings, which is unrealistic for many recommendation
systems and a large number of  users.

Second, the policies that have been analyzed in \cite{RusTsi10} are based on
an alternation of pure exploration and pure exploitation. In
exploration phases, recommendations are completely independent of the
user profile. This is somewhat unrealistic (and potentially harmful)
in practice because it would translate into a poor user experience. Consequently, 
we postulate the following desirable properties for a ``good'' policy:
\begin{enumerate}
\item \emph{Constant-optimal cumulative reward: } For all time
  $t$, $\sum_{\ell=1}^tr_{\ell}$ is within a constant factor of the
  maximum achievable reward.
\item \emph{Constant-optimal regret:} Let the maximum achievable reward
	be $r^{\rm opt} \equiv \sup_{x\in\cX_p} \<x,\theta\>$, then
	the `regret' $\sum_{\ell=1}^t (r^{\rm opt}-r_\ell)$ is
  within a constant of the optimal.
\item \emph{Approximate monotonicity:} For any $0\le t\le s$, we have
  $\prob\{\<x_s,\theta\>\ge  c_1r_t\} \ge c_2$ for $c_1, c_2$ as close
  as possible to $1$.  
\end{enumerate}

We aim, in this paper, to address the first objection in a fairly 
general setting. In particular, when $t$ is small, say a constant times
$p$, we provide matching upper and lower bounds for the cumulative
reward under certain mild assumptions on the set of arms $\cX_p$. Under
more restrictive assumptions on the set of arms $\cX_p$, our policy 
can be extended to achieve near optimal regret as well. Although we will
not prove a formal result of the type of Point 3, our policy is an excellent
candidate in that respect.  

The paper is organized as follows : in Section \ref{sec:Results} we formally 
state our main results. In Section \ref{sec:Rel} we discuss further related work.
Some explication on the assumptions we make on the 
set of arms $\cX_p$ is provided in Section \ref{sec:Assm}. In Section
\ref{sec:Num} we present numerical simulations of our policy 
on synthetic as well as realistic data from the Netflix and MovieLens
datasets. We also compare our results with prior work, and in particular with the
policy of \cite{RusTsi10}.  Finally, 
proofs are given in Sections \ref{sec:Proof} and \ref{sec:Proof2}.

\section{Main results}\label{sec:Results}
We denote by $\Ball(x;\rho)$ the Euclidean ball in $\reals^p$ with radius
$\rho$ and center $x \in \reals^p$. If $x$ is the origin, we omit this
argument and write $\Ball(\rho)$. Also, we
denote the identity matrix as $\id_p$. 

Our achievability results are based on the following assumption on the
set of arms $\cX_p$.
\begin{assumption} \label{assm:1}
Assume, without loss of generality, $\cX_p \in \Ball(1)$. 
We further assume that there exists a subset of arms $\cX'_p\subseteq \cX_p$ such that:
\begin{enumerate}
	\item For each $x \in \cX'_p$ there exists a distribution $\P_x(z)$ 
	supported on $\cX_p$ with $\E_x(z) = x$ and $\E_x(zz^\sT) 
	\mge (\gamma/p) \id_p$, for a constant $\gamma > 0 $. Here $\E_x(\cdot)$ 
	denotes expectation with respect to $\P_x$. 
	\item For all $\theta \in \reals^p$, $\sup_{x \in \cX'_p} \< x, \theta \> 
	\geq \kappa \norm{\theta}_2$ for some $\kappa >0$. 
	\end{enumerate}
\end{assumption}
Examples of sets satisfying Assumption \ref{assm:1} and further discussion
of its geometrical meaning are deferred to Section
\ref{sec:Assm}. Intuitively, it requires that $\cX_p$ is `well
spread-out'  in the unit ball $\Ball(1)$.

Following \cite{RusTsi10} we will also assume  $\theta \in \reals^p$
to be drawn from a  Gaussian prior $\normal(0, \id_p/p)$. 
This roughly corresponds to the assumption that nothing
is known a priori about the user except the length of its feature
vector $\norm\theta \approx 1$.
Under this assumption, the scalar product $\<x_1,\theta\>$, where $x_1$
is necessarily independent of $\theta$, is also
Gaussian with mean $0$ and variance $1/p$ and hence  $\Delta =
p\sigma^2$ is noise-to-signal ratio for the problem. Our results are
explicitly computable and apply to any value of $\Delta$. However they
are constant-optimal for $\Delta$ bounded away from zero. 

Let
$\htheta_t$ be the posterior mean estimate of $\theta$ at time
$t$, namely
\begin{eqnarray}
\htheta_t \equiv \arg\min_{\theta\in\reals^p}
\Big\{\frac{1}{2\sigma^2}\sum_{\ell=1}^{t-1}\big(y_\ell-\<x_{\ell},\theta\>\big)^2
+\frac{1}{2p}\|\theta\|^2\Big\}\, .
\end{eqnarray}
A greedy policy would select the arm $x\in\cX_p$ that maximizes the
expected one-step reward $\<x,\htheta_t\>$. As for the
classical multiarmed bandit problem, we would like to combine this
approach with random exploration of alternative arms.
We will refer to our strategy as \OurStrategy since it combines
exploration and exploitation in a continuous manner. 
This  policy is summarized in Table \ref{alg:se}.
\begin{algorithm}
	\caption{\OurStrategy}\label{alg:se}
\begin{algorithmic}[1]
\State \textbf{initialize} $\ell = 1$, $\htheta_1 = 0$, $\hth_1/\norm{\hth_1} = e_1$, $\Sigma_1 = \id_p/p$.  
 \Repeat 
 \State Compute: $\xtil_\ell = \arg \max_{x \in \cX'_p} \<\htheta_\ell, x\>$.
 \State Play: $x_\ell \sim \P_{\xtil_\ell}(\cdot)$, observe $y_t = \<x_t, \theta\> + z_t$. 
 \State Update: $\ell \gets \ell+1$, $\htheta_\ell = \arg\min_{\theta \in \reals^p} \frac{1}{2\sigma^2} %
 \sum_{i = 1}^{\ell-1} (y_i - \<x_i, \theta\>)^2 + \frac{1}{2p}\norm{\theta}^2$.   
\Until $\ell > t$
\end{algorithmic}
\end{algorithm}

The policy \OurStrategy uses a fixed mixture of exploration and
exploitation as prescribed by the probability kernel
$\prob_x(\,\cdot\,)$. As formalized below, this is constant optimal in
the data poor high-dimensional regime  hence on
small time horizons.

While the focus of this paper is on the data poor regime, 
it is useful to discuss how the latter blends with the data rich
regime that arises on long time horizons. This also clarifies where 
the boundary between short and long time horizons sits.  
Of course, one possibility would be to switch to a long-time-horizon
policy such as the one of \cite{RusTsi10}. Alternatively, in the spirit of
approximate monotonicity, we can try
to progressively reduce the random exploration component as $t$ increases.
We will illustrate this point  for the special case
$\cX_p\equiv\Ball(1)$. In that case, we 
introduce a  special case of \OurStrategy, called \BallStrategy, cf. Table 
\ref{alg:be}. The amount of random exploration at time $t$ is gauged
by a parameter $\beta_t$ that decreases from $\beta_1=\Theta(1)$ to
$\beta_t\to 0$ as $t\to\infty$. 

Note that, for $t\le p\Delta$,
$\beta_t$ is kept constant with $\beta_t=\sqrt{2/3}$. In  this regime
\BallStrategy corresponds to \OurStrategy with the choice
$\cX'_p=\partial\Ball(1/\sqrt 3)$ (here and below
$\partial S$ denotes the boundary of a set $S$). It is not hard to
check that this choice of $\cX'_p$ satisfies Assumption \ref{assm:1}
with $\kappa = 1/\sqrt 3$
and $\gamma = 2/3$. For further discussion on this point, we refer the reader to Section
\ref{sec:Assm}.

\begin{algorithm}
	\caption{\BallStrategy}\label{alg:be}
\begin{algorithmic}[1]
\State \textbf{initialize} $\ell = 1$, $\htheta_1 = 0$, $\hth_1/\norm{\hth_1} = e_1$, $\Sigma_1 = \id_p/p$, $\proj_1 = \id_p - e_1e_1^\sT$.  
 \Repeat 
 \State Compute: $\xtil_\ell = \arg \max_{x \in \Ball(1)} \<\htheta_\ell, x\> = \htheta_\ell/\norm{\htheta_\ell}$, $\beta_\ell = \sqrt{2/3}\min(p\Delta/\ell, 1)^{1/4}$. 
 \State Play: $x_t = \sqrt{1-\beta_\ell^2} \xtil_\ell + \beta_\ell \proj_\ell u_\ell$, where $u_\ell$ is a uniformly sampled unit vector, independent of the past. 
 \State Observe: $y_t = \<x_t, \theta\> + z_t$. 
 \State Update: $\ell \gets \ell+1$, $\htheta_\ell = \arg\min_{\theta \in \reals^p} \frac{1}{2\sigma^2}%
 \sum_{i = 1}^{\ell-1} (y_i - \<x_i, \theta\>)^2 + \frac{1}{2p}\norm{\theta}^2$, $\proj_\ell = \id_p - \htheta_\ell\htheta_\ell^\sT/\norm{\htheta_\ell}^2$.   
\Until $\ell > t$
\end{algorithmic}
\end{algorithm}
Our main result characterizes the cumulative reward
\begin{align*}
R_t \equiv \sum_{\ell=1}^tr_t = \sum_{\ell=1}^t
\E\{\<x_{\ell},\theta\>\}\, .
\end{align*}
\begin{theorem}\label{thm:optrew}
Consider the linear bandits problem with
$\theta\sim\normal(0,\id_{p\times p}/p)$, $x_t\in
\cX_p\subseteq\Ball(1)$ satisfying Assumption \ref{assm:1},
and $p\sigma^2=\Delta$. Further assume that $p\ge 2$ and $p\Delta \ge 2$. 

Then there exists a constant $C_1=C_1(\kappa, \gamma, \Delta)$ bounded for 
$\kappa, \gamma $ and $\Delta$  bounded away from zero, 
such that \OurStrategy achieves, for $1< t \leq p\Delta$, cumulative reward
\begin{align*}
 \quad R_t &\geq C_1\,
t^{3/2}p^{-1/2}. 
%
\end{align*}
%
Further, the cumulative reward of \emph{any} strategy is bounded for $1 \le t \le p\Delta$ as:
\begin{align*}
\quad	R_t\leq C_2\,
t^{3/2}p^{-1/2}\, .
\end{align*}
We may take the constants $C_1(\kappa, \gamma, \Delta)$ and $C_2(\Delta)$ to be:
\begin{align*}
	C_1 = \frac{\kappa \sqrt \Delta\, C(\gamma, \Delta)}{24\,\alpha(\gamma, \Delta)},%
&\qquad 	C_2 = \frac{2}{3\sqrt\Delta}, \\
\textrm{where } \quad C(\gamma, \Delta)= \frac{\gamma}{4(\Delta+1)},%
&\qquad	\alpha(\gamma, \Delta) = 1+\left[3\log\left( \frac{96}{\Delta\, C(\gamma, \Delta)}\right)\right]^{1/2}.
\end{align*}
\end{theorem}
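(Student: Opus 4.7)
The plan is to treat the upper and lower bounds separately, since the upper bound applies to any policy while the lower bound requires the exploration structure of \OurStrategy.

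For the \emph{upper bound}, I would work with the Gaussian posterior of $\theta$ given $\F_{\ell-1}$, namely $\normal(\hth_\ell,\Sigma_\ell)$ with $\Sigma_\ell^{-1} = p\,\id_p + \sigma^{-2}\sum_{i<\ell}x_ix_i^\sT$. Since any admissible $x_\ell$ is conditionally independent of $\theta$ given $\F_{\ell-1}$, the tower property yields $r_\ell = \E\<x_\ell,\hth_\ell\>$, and $\|x_\ell\|\le 1$ combined with Jensen gives $r_\ell\le \sqrt{\E\|\hth_\ell\|^2}$. The Bayesian MMSE identity produces $\E\|\hth_\ell\|^2 = 1 - \E\,\Tr(\Sigma_\ell)$, and the AM--HM inequality applied to the eigenvalues of $\Sigma_\ell^{-1}$, together with $\Tr(\Sigma_\ell^{-1})\le p^2 + (\ell-1)/\sigma^2$ (from $\|x_i\|\le 1$), gives $\Tr(\Sigma_\ell)\ge 1/(1+(\ell-1)/(p\Delta))$. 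Combining, $r_\ell\le \sqrt{(\ell-1)/(p\Delta)}$, and summing via integral comparison produces $R_t\le (2/3)t^{3/2}/\sqrt{p\Delta}$, which is $C_2\, t^{3/2}/\sqrt{p}$.

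For the \emph{lower bound} attained by \OurStrategy, Assumption \ref{assm:1}(2) guarantees $\<\xtil_\ell,\hth_\ell\>\ge \kappa\|\hth_\ell\|$, and Assumption \ref{assm:1}(1) guarantees $\E[x_\ell|\F_{\ell-1}]=\xtil_\ell$. Thus $r_\ell\ge \kappa\,\E\|\hth_\ell\|$, and the main task is to lower bound $\E\|\hth_\ell\|$. First I would exploit $\E[x_ix_i^\sT|\F_{i-1}]\mge (\gamma/p)\,\id_p$ together with a concentration inequality (matrix Freedman/Bernstein applied to the matrix martingale $\sum_{i<\ell}(x_ix_i^\sT-\E[x_ix_i^\sT|\F_{i-1}])$, whose increments have operator norm at most $1$) to show that on a good event $\mathcal G_\ell$ of probability $\ge 1-\delta$ one has $\Sigma_\ell^{-1}\mge p\bigl(1+c(\ell-1)\gamma/(p\Delta)\bigr)\id_p$ for an explicit $c$ depending on $\delta$; the trade-off between $c$ and $\delta$ is precisely what yields the factor $\alpha(\gamma,\Delta)$. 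On $\mathcal G_\ell$ this gives $\Tr(\Sigma_\ell)\le 1/(1+c(\ell-1)\gamma/(p\Delta))$, and combined with the trivial $\Tr(\Sigma_\ell)\le 1$ off $\mathcal G_\ell$ it yields $\E\|\hth_\ell\|^2\gtrsim (\ell-1)\gamma/((\Delta+1)p)$ after an appropriate balancing.

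The hard part will be converting $\E\|\hth_\ell\|^2$ into a lower bound on $\E\|\hth_\ell\|$, since Jensen runs the wrong way. I would upgrade the event $\mathcal G_\ell$ via a Paley--Zygmund step: on $\mathcal G_\ell$ the posterior covariance is deterministically controlled, and a direct fourth-moment computation using the Gaussian posterior shows that $\|\hth_\ell\|^2$ is within a constant factor of its conditional mean with constant probability, giving a pointwise bound $\|\hth_\ell\|\gtrsim \sqrt{(\ell-1)\gamma/((\Delta+1)p)}/\alpha(\gamma,\Delta)$ on an event of positive probability. Summing $\sum_{\ell\le t}\sqrt{\ell-1}$ by integral comparison then gives $R_t\ge C_1\, t^{3/2}/\sqrt{p}$ with $C_1$ of the stated form. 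The logarithmic factor inside $\alpha(\gamma,\Delta)$ and the $\gamma/(\Delta+1)$ factor inside $C(\gamma,\Delta)$ both trace back to this concentration radius, which must simultaneously control the bad event and the Paley--Zygmund tail.
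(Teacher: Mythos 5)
Your upper-bound argument is exactly the paper's: tower property plus Cauchy--Schwarz to get $r_\ell\le\sqrt{\E\norm{\hth_\ell}^2}$, the Bayes identity $\E\norm{\hth_\ell}^2=1-\E\Tr(\Sigma_\ell)$, AM--HM on the eigenvalues of $\Sigma_\ell^{-1}$ with $\Tr(\Sigma_\ell^{-1})\le p^2+(\ell-1)/\sigma^2$, and an integral comparison. That part is fine. The reduction $r_\ell\ge\kappa\,\E\norm{\hth_\ell}$ via Assumption \ref{assm:1} is also the paper's first step.

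The genuine gap is in how you propose to lower bound $\E\norm{\hth_\ell}^2$. You want a high-probability event on which $\Sigma_\ell^{-1}\mge p\bigl(1+c(\ell-1)\gamma/(p\Delta)\bigr)\id_p$, obtained from matrix Freedman applied to $\sum_{i<\ell}\bigl(x_ix_i^\sT-\E[x_ix_i^\sT\vert\F_{i-1}]\bigr)$. This cannot work in the regime the theorem addresses: for $\ell-1<p$ the matrix $\sum_{i<\ell}x_ix_i^\sT$ has rank at most $\ell-1$, so $\lambda_{\min}(\Sigma_\ell^{-1})=p$ deterministically and the claimed operator inequality is false with probability one; no concentration inequality can repair a rank obstruction. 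Even a relaxed version aiming only at $\Tr(\Sigma_\ell)$ is out of reach by this route, since the martingale fluctuation is of order $\sqrt{\ell}$ (in operator norm, with log factors) while the accumulated conditional mean is only $(\ell-1)\gamma/p$, so the signal is swamped unless $\ell\gtrsim p^2$, far beyond $t\le p\Delta$. The paper avoids high-probability control of the design spectrum altogether: it tracks $\E\norm{\hth_t}^2$ through the exact one-step recursion $\E(\norm{\hth_{t+1}}^2\vert\cG_t)=\norm{\hth_t}^2+\frac{x_t^\sT\Sigma_t^2x_t}{\sigma^2+x_t^\sT\Sigma_tx_t}$, uses Assumption \ref{assm:1}.1 only through $\E(x_tx_t^\sT\vert\F_{t-1})\mge(\gamma/p)\id_p$, and lower bounds $\E\Tr(\Sigma_t^2)\ge\frac1p\exp\{-2(t-1)/(p\Delta)\}$ by an AM--GM/determinant argument that needs nothing beyond $\Tr\bigl(\sum_\ell x_\ell x_\ell^\sT\bigr)\le t-1$; summing gives $\E\norm{\hth_t}^2\ge C(\gamma,\Delta)\,(t-1)/p$ in expectation, which is all that is needed.

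Your second step (second moment to first moment) also has a soft spot. Paley--Zygmund would indeed suffice if you had $\E\norm{\hth_\ell}^4\lesssim(\E\norm{\hth_\ell}^2)^2$, but your proposed source for that bound --- ``a direct fourth-moment computation using the Gaussian posterior'' --- conflates the posterior law of $\theta$ given the data with the marginal law of the estimator $\hth_\ell$, which is not Gaussian because the arms are chosen adaptively. The moment (or tail) control has to come from the martingale structure of $\hth_t=\sum_{\ell<t}\xi_\ell$ itself: the paper shows each increment is conditionally Gaussian with variance at most $1/(p\Delta)$, invokes a vector-valued martingale deviation bound (Juditsky--Nemirovski) to get sub-Gaussian tails for $\norm{\hth_t}$, and then converts via the truncation identity $\E\norm{\hth_t}^2\le\sqrt a\,\E\norm{\hth_t}+\int_a^\infty\P(\norm{\hth_t}^2\ge y)\,\de y$; the balancing of $a$ is what produces the factor $\alpha(\gamma,\Delta)$. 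If you replace your concentration-of-design step by the paper's recursion and your posterior-based fourth-moment claim by a martingale tail bound of this kind, your outline closes.
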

In the special case where $\cX_p=\Ball(1)$, we have the 
following result demonstrating that \BallStrategy has near-optimal
performance in the long time horizon as well.
\begin{theorem}\label{thm:optrewlar}
Consider the linear bandits problem with 
$\theta\sim\normal(0, \id_{p\times p}/p)$ with the set 
of arms $\cX_p$ is the unit ball, i.e. $\Ball(1)$.  Assume, 
$p \ge 2$ and $p\Delta \ge 2$. Then
\BallStrategy achieves for all $t > p\Delta$:
\begin{align*}
	 R_t &\ge \ropt\, t - C_3%
	\, (pt)^{1/2+\omega(p)} \, .
\end{align*}
where:
\begin{align*}
\omega(p) = 1/(2(p+2)),&\qquad%
C_3(\Delta) = 70\left(\frac{\Delta+ 1}{\sqrt\Delta}\right).
\end{align*} 
\end{theorem}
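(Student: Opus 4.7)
The plan is to control the cumulative regret
\[
\E[\ropt]\cdot t - R_t = \sum_{\ell=1}^t \E[\|\theta\| - \<x_\ell, \theta\>]
\]
by splitting each per-step contribution into a ``scaling loss'' from the exploration amplitude $\beta_\ell$ and an ``estimation loss'' from the angle between $\xtil_\ell=\htheta_\ell/\|\htheta_\ell\|$ and $\theta/\|\theta\|$. Because $u_\ell$ is uniform on the sphere and independent of $\cF_{\ell-1}$, $\E\<\proj_\ell u_\ell,\theta\>=0$, so $\E\<x_\ell,\theta\>=\sqrt{1-\beta_\ell^2}\,\E\<\xtil_\ell,\theta\>$. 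Using $1-\sqrt{1-\beta_\ell^2}\le\beta_\ell^2$ and $\E\|\theta\|\le 1$, the scaling loss is at most $\sum_\ell\beta_\ell^2$; splitting this sum at $\ell=p\Delta$ and plugging in $\beta_\ell^2=(2/3)\min(p\Delta/\ell,1)^{1/2}$ gives a bound of $O(\sqrt{p\Delta t})$.

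For the estimation loss, I would use the identity $\|\theta\|^2-\<\xtil_\ell,\theta\>^2=\|\proj_\ell\theta\|^2$, which gives $\|\theta\|-\<\xtil_\ell,\theta\>\le\|\proj_\ell\theta\|^2/\|\theta\|$ on $\{\<\xtil_\ell,\theta\>\ge 0\}$ (the complementary event is handled separately via the crude bound $2\|\theta\|$, which has exponentially small probability once $\htheta_\ell$ carries any information). Truncating over $\|\theta\|$ at a level $\delta$ to be optimized,
\[
\E\Bigl[\frac{\|\proj_\ell\theta\|^2}{\|\theta\|}\Bigr]\le \frac{\E\|\proj_\ell\theta\|^2}{\delta} + \delta\,\prob(\|\theta\|\le\delta) \lesssim \frac{\E\|\proj_\ell\theta\|^2}{\delta} + \delta^{p+1},
\]
using $\prob(\|\theta\|<\delta)\lesssim\delta^p$ from the density of $\chi^2_p$ at zero. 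Conditional on $\cF_{\ell-1}$, $\theta\sim\normal(\htheta_\ell,\Sigma_\ell)$ and $\proj_\ell\htheta_\ell=0$, so $\E[\|\proj_\ell\theta\|^2\mid\cF_{\ell-1}]=\tr(\proj_\ell\Sigma_\ell)$. If one can establish $\E[\tr(\proj_\ell\Sigma_\ell)]\lesssim\sqrt{p\Delta/\ell}$ for $\ell>p\Delta$, then optimizing $\delta^{p+2}\asymp\E\|\proj_\ell\theta\|^2$ produces a per-step estimation loss of order $(p\Delta/\ell)^{(p+1)/(2(p+2))}$, and summing over $\ell$ yields $(pt)^{1/2+1/(2(p+2))}$; this is precisely where $\omega(p)=1/(2(p+2))$ enters.

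The key input is therefore the posterior-covariance bound. The exploration kernel yields, for any unit vector $v$,
\[
\E\bigl[(v^\sT x_\ell)^2\mid\cF_{\ell-1}\bigr] = (1-\beta_\ell^2)\<v,\xtil_\ell\>^2 + \frac{\beta_\ell^2}{p}(1-\<v,\xtil_\ell\>^2)\ge \frac{\beta_\ell^2}{p}
\]
(using $\beta_\ell^2\le 2/3\le p/(p+1)$). Summing over $i<\ell$ and dividing by $\sigma^2=\Delta/p$, the expected Fisher information in any direction is at least $\sum_{i<\ell}\beta_i^2/\Delta\gtrsim\sqrt{p\ell/\Delta}$ for $\ell>p\Delta$; if this lower bound transfers almost surely to $G_\ell=\Sigma_\ell^{-1}$, then $\tr(\proj_\ell\Sigma_\ell)\le(p-1)/\lambda_{\min}(G_\ell)\lesssim\sqrt{p\Delta/\ell}$, as needed. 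The main obstacle will be making this transfer rigorous: the $x_i$ are adaptive, and an Azuma--Hoeffding argument using $|\<v,x_i\>|\le 1$ only yields $O(\sqrt\ell)$ fluctuations, which dwarf the mean $\sqrt{p\ell/\Delta}$ for moderate $p$. I would apply a matrix Freedman--type inequality to the matrix martingale $\sum_{i<\ell}(x_ix_i^\sT-\E[x_ix_i^\sT\mid\cF_{i-1}])$ with the tighter quadratic-variation proxy $\sum_i\beta_i^2/p$, combined with an $\varepsilon$-net argument over unit vectors $v$, and absorb rare failure events into the trivial bound $\tr(\proj_\ell\Sigma_\ell)\le\tr(\Sigma_1)=1$. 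Combining with the scaling loss delivers the stated bound.
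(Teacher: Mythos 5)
Your overall architecture is the same as the paper's: you decompose the per-step regret into the exploration (scaling) loss $1-\bbeta_\ell\le\beta_\ell^2$ plus an estimation loss, bound the estimation loss by a ratio of the form $\|\theta-\hth_\ell\|^2/\|\theta\|$ (or $\|\proj_\ell\theta\|^2/\|\theta\|$) truncated at $\|\theta\|\ge\delta$, pay $\delta^{p+1}$ on the small-norm event, and optimize $\delta$ — this is exactly where the paper's $\omega(p)=1/(2(p+2))$ comes from as well, and your reduction to the single estimate $\E[\Tr(\proj_\ell\Sigma_\ell)]\lesssim\sqrt{p\Delta/\ell}$ is the right target (it is the paper's Lemma \ref{lem:tracebnd}).

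The gap is in how you propose to prove that estimate. Your plan is a high-probability lower bound on $\lambda_{\min}(\Sigma_\ell^{-1})$ via matrix concentration, and it does not deliver the theorem as stated. First, the ``tighter quadratic-variation proxy $\sum_i\beta_i^2/p$'' is only valid per fixed direction $v$ (the exploitation component $\bbeta_i\<v,\xtil_i\>$ is $\cG$-measurable and contributes no variance there); the matrix-Freedman variance proxy is $\bigl\lVert\sum_i\E[(x_ix_i^\sT-\E_i x_ix_i^\sT)^2\mid\cF_{i-1}]\bigr\rVert$, which is inflated by the $\xtil_i$ directions to order $\ell$, making that route useless. If you instead keep the per-direction bound and union over an $\varepsilon$-net, the $e^{\Theta(p)}$ net cost forces the compensator level $\mu\approx\sqrt{\Delta\ell/p}$ to exceed $\Theta(p)$, i.e.\ the bound only activates for $\ell\gtrsim p^3/\Delta$; even the cleanest option (adapted matrix Chernoff lower tail, dimension factor $p$) needs $\sqrt{\Delta\ell/p}\gtrsim\log p$, i.e.\ $\ell\gtrsim p\log^2 p/\Delta$. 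In the intermediate window $p\Delta<\ell\lesssim p\log^2p/\Delta$ (or $p^3/\Delta$ with the net) your fallback of absorbing failure events into $\Tr(\Sigma_\ell)\le 1$ adds regret of order the window length, which destroys the claimed $p$-independent constant $C_3(\Delta)$ and at best yields the bound with extra $\mathrm{poly}\log p$ (or $\mathrm{poly}(p)$) terms. The paper sidesteps concentration entirely: by the Sherman--Morrison update, $\Tr(\Sigma_t)=\Tr(\Sigma_{t-1})-\frac{x_t^\sT\Sigma_{t-1}^2x_t}{\sigma^2+x_t^\sT\Sigma_{t-1}x_t}$; taking conditional expectations, using the exploration component of the kernel, and applying Cauchy--Schwarz/Jensen ($\E\Tr(\Sigma_{t-1}^2)\ge(\E\Tr\Sigma_{t-1})^2/p$) closes a recursion $m_t\le m_{t-1}-c\,m_{t-1}^2/\sqrt{pt}$ for $m_t=\E\Tr(\Sigma_t)$, whose ODE comparison gives $m_t\le 3\frac{\Delta+1}{\sqrt\Delta}\sqrt{p/t}$ for all $t\ge p\Delta+1$ with clean constants. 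You should replace your concentration step with an expectation-level recursion of this kind (a minor additional remark: your treatment of the event $\<\xtil_\ell,\theta\><0$ as ``exponentially small'' is not automatic when $\|\hth_\ell\|$ is small; the inequality $\theta^\sT(\theta/\|\theta\|-\hth_\ell/\|\hth_\ell\|)\le 2\|\theta-\hth_\ell\|^2/\|\theta\|$ used in the paper holds without any sign restriction and avoids this case split).
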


For $t>p\Delta$, we can obtain a matching upper bound by a simple
modification of the arguments
in \cite{RusTsi10}.
\begin{theorem}[Rusmevichientong and Tsitsiklis]\label{thm:upperbndlar}
	Under the described model, the cumulative reward of any policy
        is bounded as
        follows 
\begin{align*}
\mbox{for }t > p\Delta, \quad 	R_t \leq \ropt\,  t - \sqrt{pt\Delta}
+ \frac{p\Delta}{2}\, .
\end{align*}
\end{theorem}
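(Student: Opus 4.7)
My plan is to adapt the posterior-variance lower bound of \cite{RusTsi10} to the Bayesian Gaussian setup. I focus on $\cX_p = \Ball(1)$ for concreteness; the general case $\cX_p \subseteq \Ball(1)$ with $\ropt = \sup_{x\in\cX_p}\<x,\theta\>$ follows by the same argument. Since $x_\ell$ is $\mathcal{F}_{\ell-1}$-measurable and $\hat\theta_\ell := \E[\theta \mid \mathcal{F}_{\ell-1}]$ is the posterior mean, the tower rule gives $\E[\<x_\ell, \theta\>] = \E[\<x_\ell, \hat\theta_\ell\>]$. Gaussian-Gaussian conjugacy produces the explicit posterior $\theta \mid \mathcal{F}_{\ell-1} \sim \normal(\hat\theta_\ell, \Sigma_\ell)$ with $\Sigma_\ell = (p\,\id_p + \sigma^{-2}\sum_{i<\ell} x_i x_i^\sT)^{-1}$, and Bayes orthogonality yields $\E[\|\hat\theta_\ell\|^2] = 1 - \E[\Tr\Sigma_\ell]$.

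For the per-step regret I would use $\ropt = \|\theta\|$ together with the elementary identity
\begin{equation*}
\|\theta\| - \<x_\ell,\theta\> \;=\; \frac{\|\theta\|^2 - \<x_\ell,\theta\>^2}{\|\theta\| + \<x_\ell,\theta\>}\,,
\end{equation*}
and its conditional variance counterpart (for $\|x_\ell\|=1$)
\begin{equation*}
\E\bigl[\|\theta\|^2 - \<x_\ell,\theta\>^2 \mid \mathcal{F}_{\ell-1}\bigr] \;\ge\; \Tr\bigl((\id_p - x_\ell x_\ell^\sT)\Sigma_\ell\bigr).
\end{equation*}
The pathwise bound $\Tr\Sigma_\ell \ge p\Delta/(p\Delta + \ell - 1)$, obtained by applying convexity of $\lambda \mapsto (p+\lambda/\sigma^2)^{-1}$ to the eigenvalues of $\sum x_i x_i^\sT$ (whose trace is at most $\ell-1$), together with the operator-norm bound $x_\ell^\sT\Sigma_\ell x_\ell \le 1/p$, then give a quantitative lower bound on the conditional second moment.

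The final step is to sum these lower bounds over $\ell \le t$ and match the $\sqrt{pt\Delta} - p\Delta/2$ deficit. Controlling $\|\theta\|$ in the denominator on a high-probability event (since $\|\theta\|^2 \sim \chi_p^2/p$ concentrates around $1$) and evaluating in closed form
\begin{equation*}
\int_0^t \sqrt{s/(p\Delta+s)}\,ds \;=\; \sqrt{t(p\Delta+t)} - p\Delta\,\operatorname{arctanh}\bigl(\sqrt{t/(p\Delta+t)}\bigr),
\end{equation*}
together with the expansion $\sqrt{t(p\Delta+t)} = t + p\Delta/2 - O((p\Delta)^2/t)$ valid for $t \ge p\Delta$, should produce the stated bound up to the $O(1/p)$ gap between $\E[\|\theta\|]$ and $1$, which is subdominant for $t > p\Delta$.

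The main obstacle is sharpness: a naive first-moment argument based on $\<x_\ell, \hat\theta_\ell\> \le \|\hat\theta_\ell\|$ only yields a logarithmic regret lower bound. To extract the $\sqrt{pt\Delta}$ rate one must apply the second-moment identity above to $\|\theta\|^2 - \<x_\ell,\theta\>^2$ and then convert the $p\Delta/\ell$ posterior-variance scale into the required $\sqrt{p\Delta/\ell}$ per-step regret rate via the square-root structure of $\|\theta\| - \<x_\ell,\theta\> \ge (\|\theta\|^2 - \<x_\ell,\theta\>^2)/(2\|\theta\|)$. This matches the optimal exploration rate achieved by \BallStrategy at the lower bound, and is the substance of the ``simple modification'' of \cite{RusTsi10} referenced in the theorem statement.
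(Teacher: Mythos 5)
Your setup (Gaussian posterior, $\E\norm{\hth_\ell}^2=1-\E\Tr(\Sigma_\ell)$, and the identity $\norm{\theta}-\<x_\ell,\theta\>=(\norm{\theta}^2-\<x_\ell,\theta\>^2)/(\norm{\theta}+\<x_\ell,\theta\>)$) is a reasonable start, and to be fair the paper itself does not reprove this theorem but defers to \cite{RusTsi10}. However, the quantitative core of your sketch does not reach the stated $\sqrt{pt\Delta}$ deficit. Your per-step bound combines $\E[\norm{\theta}^2-\<x_\ell,\theta\>^2\mid\text{past}]\ge\Tr(\Sigma_\ell)-x_\ell^\sT\Sigma_\ell x_\ell$ with the worst-case information bound $\Tr(\Sigma_\ell)\ge p\Delta/(p\Delta+\ell-1)$; this is of order $p\Delta/\ell$ per step (and is even vacuous once $\ell\gtrsim p^2\Delta$, because of the $-x_\ell^\sT\Sigma_\ell x_\ell\ge -1/p$ term), so summing gives a regret lower bound of order $p\Delta\log(t/p\Delta)$, not $\sqrt{pt\Delta}$. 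Dividing by $\norm{\theta}+\<x_\ell,\theta\>\le 2\norm{\theta}$ changes this only by an $O(1)$ factor; the claim in your last paragraph that the ``square-root structure'' converts the $p\Delta/\ell$ scale into $\sqrt{p\Delta/\ell}$ is a non sequitur. Your closing computation shows the same problem: $\int_0^t\sqrt{s/(p\Delta+s)}\,\d s=t+p\Delta/2-p\Delta\operatorname{arctanh}(\sqrt{t/(p\Delta+t)})+O((p\Delta)^2/t)$ yields only a logarithmic deficit, which already falls short of $\sqrt{pt\Delta}-p\Delta/2$ at $t$ of order $100\,p\Delta$. Moreover the slack $1-\E\norm{\theta}=\Theta(1/p)$ that you dismiss costs $\Theta(t/p)$ in the comparison with $\ropt t$, which is not subdominant once $t\gtrsim p^3\Delta$.

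The missing idea is the self-referential information--regret tradeoff that drives the Rusmevichientong--Tsitsiklis lower bound. The bound $\Tr(\Sigma_\ell^{-1})\le p^2+(\ell-1)/\sigma^2$ charges the policy for all the information it could conceivably have collected, but information gathered along the direction of $\theta$ is free (it costs no regret) and does not help align future arms; what controls future misalignment is the information orthogonal to $\theta$, and per step this is at most $\norm{x_i}^2-\<x_i,\theta\>^2/\norm{\theta}^2\le q_i/\norm{\theta}^2$ with $q_i\equiv\norm{\theta}^2-\<x_i,\theta\>^2$, i.e.\ it is bounded by (a constant times) the regret already paid. One must therefore prove a recursion of the form $\E[q_\ell]\gtrsim p\Delta/\bigl(p\Delta+\sum_{i<\ell}\E[q_i]\bigr)$, via a posterior-variance (Bayesian Cram\'er--Rao) argument restricted to the $p-1$ directions orthogonal to $\theta$; then with $Q_t=\sum_{\ell\le t}\E[q_\ell]$ the inequality $Q_t(Q_t-Q_{t-1})\gtrsim p\Delta$ gives $Q_t\gtrsim\sqrt{p\Delta t}$ and hence regret at least $Q_t/(2\norm{\theta})\approx\sqrt{pt\Delta}$, which is the ``simple modification'' of \cite{RusTsi10} the theorem refers to. Without closing this loop --- bounding the accumulated \emph{useful} information by the accumulated regret rather than by $\ell-1$ --- no algebraic rearrangement of your per-step inequality can beat the logarithmic rate.
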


The above results characterize a sharp dichotomy between a
low-dimensional, data rich regime for $t>p\Delta$ and a
high-dimensional, data poor regime for $t\le p\Delta$. In the first
case classical theory applies: the reward approaches the oracle
performance with a gap of order $\sqrt{pt}$.
This behavior is in turn closely related to central limit theorem 
scaling in  asymptotic statistics.  Notice that the scaling with $t$
of our upper bound on the risk of \BallStrategy for large $t$ is suboptimal, namely
$(pt)^{1/2 + \omega (p)}$. Since however $\omega(p) = \Theta(1/p)$ the
difference can be seen only on exponential time scales $t\ge
\exp\{\Theta(p)\}$ and is likely to be irrelevant for moderate to
large values $p$ (see Section \ref{sec:Num} for a demonstration). It
is an open problem to establish the exact asymptotic
scaling\footnote{Simulations suggest that the upper bound $(pt)^{1/2 +
    \omega (p)}$ might be tight.} of \BallStrategy.

In the high-dimensional, data poor regime $t\le p\Delta$, the number of observations is
smaller than the number of  model parameters and the vector $\theta$ can only be
partially  estimated. Nevertheless, such partial estimate can be
exploited to produce a cumulative reward scaling as $t^{3/2}p^{-1/2}$.
 In this regime performances are not limited by central limit theorem
fluctuations in the estimate of $\theta$. The limiting factor is
instead the dimension of the parameter space that can be effectively
explored in $t$ steps. 

In order to understand this behavior, it is convenient to consider the
noiseless case $\sigma=0$. This is a somewhat degenerate case that, 
although not covered by the above theorem, yields useful
intuition. In the noiseless case, acquiring $t$ observations $y_1$,
\dots $y_t$ is equivalent to learning the projection of $\theta$ on the
$t$-dimensional subspace spanned by $x_1,\dots,x_t$. 
Equivalently, we learn $t$ coordinates of $\theta$ in a suitable
basis.  Since the mean square value of each component of $\theta$ is
$1/p$, this yields an estimate of $\htheta_t$ (the restriction to these
coordinates) with $\E\|\htheta_t\|^2_2 =t/p$. By selecting
$x_t$ in the direction of $\htheta_t$ we achieve instantaneous reward
$r_t\approx\sqrt{t/p}$
and hence cumulative reward $R_t = \Theta(t^{3/2}p^{-1/2})$ as stated
in the theorem.

\section{Related work}\label{sec:Rel}
Auer in \cite{Auer02} first considered a model similar to ours, wherein
the parameter $\theta$ and noise $z_t$ are bounded almost surely. This work
assumes $\cX _p$ finite and introduces an algorithm based on upper confidence
bounds. Dani et al. \cite{Dani08} extended the policy of \cite{Auer02} for
arbitrary compact decision sets $\cX _p$. For finite sets, \cite{Dani08} 
prove an upper bound on the regret that is logarithmic in its cardinality 
$\vert \cX_p \vert$, while for continuous sets they 
prove an upper bound of $O(\sqrt{pt}\log^{3/2}t)$. This result was further 
improved  by logarithmic factors in \cite{AbSze11}. The common theme throughout
this line of work is the use of upper confidence bounds and least-squares estimation.
The algorithms typically construct ellipsoidal confidence sets around the 
least-squares estimate $\hth$ which, with 
high probability, contain the parameter $\theta$. The algorithm then chooses optimistically 
the arm that appears the best with respect to this ellipsoid. 
As the confidence ellipsoids are initialized to be large, the bounds are 
only useful for $t \gg p$. In particular, in the high-dimensional
data-poor regime $t= O(p)$, the bounds typically become trivial.
In light of Theorem \ref{thm:upperbndlar} this is not surprising. 
Even after normalizing the noise-to-signal ratio while scaling the dimension, 
the $O(\sqrt{pt})$ dependence of the risk is relevant only for large 
time scales of $t \geq p\Delta$. This is the regime in which 
the parameter $\theta$ has been estimated fairly well.   

 Rusmevichientong and Tsitsiklis \cite{RusTsi10}  propose a phased 
policy which operates in distinct phases of learning the parameter $\theta$ 
and earning based on the current estimate of $\theta$. Although this approach 
yields order optimal bounds for the regret, it suffers from the same
shortcomings as confidence-ellipsoid based algorithms. In fact, \cite{RusTsi10} 
also consider a more general policy based on confidence bounds and prove a 
$O(\sqrt{pt}\log^{3/2}t)$ bound on the regret.

Our approach to the problem is significantly different and does not rely
on confidence bounds.    It would be interesting to understand whether
the techniques developed here can be use to improve the confidence
bounds method.

\section{On Assumption \ref{assm:1}}\label{sec:Assm}

The geometry of the set of arms $\cX_p$ is an important factor in the in the performance of any 
policy. For instance, \cite{RusTsi10}, \cite{Dani08} and \cite{AbSze11}
provide ``problem-dependent'' bounds on the regret incurred in terms
of the difference between the reward of the optimal
arm and the next-optimal arm. 
This characterization is reasonable in the long time horizon:
 if the posterior estimate $\htheta_t$ of the feature vector $\theta$ \emph{coincided} with
$\theta$ itself, only the optimal arm would matter. 
Since the posterior estimate converges to
$\theta$ in the limit of large $t$, the local geometry of $\cX_p$
around the optimal arm dictates the asymptotic behavior of the regret.

In the high-dimensional, short-time regime, the global geometry of
$\cX_p$  plays instead a crucial role. This is quantified in our
results through the parameters $\kappa$ and $\gamma$ appearing in
Assumption  \ref{assm:1}. Roughly speaking, this amounts to requiring
that $\cX_p$ is `spread out' in the unit ball.  It is useful to
discuss this intuition in a more precise manner. For the proofs of 
statements in this section we refer  to Appendix \ref{app:Arms}.

A simple case is the one in which the arm set contains a ball.
\begin{lemma}\label{lemma:ContainsBall}
If $\Ball(\rho)\subseteq \cX_p\subseteq\Ball(1)$, then $\cX_p$
satisfies Assumption \ref{assm:1} with $\kappa=\rho/\sqrt{3}$, $\gamma=2\rho^2/3$. 
\end{lemma}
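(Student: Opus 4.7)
The plan is to take $\cX'_p=\partial\Ball(\rho/\sqrt{3})$, which lies inside $\Ball(\rho)\subseteq\cX_p$ and so is a valid subset of arms, and then to exhibit for each $x\in\cX'_p$ an explicit law $\P_x$ supported on $\partial\Ball(\rho)\subseteq\cX_p$ that realizes the required mean and second-moment bounds.

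The second clause of Assumption \ref{assm:1} is immediate from this choice: for every $\theta\in\reals^p\setminus\{0\}$, the supremum of $\<x,\theta\>$ over the sphere $\partial\Ball(\rho/\sqrt{3})$ is attained at $x=(\rho/\sqrt{3})\,\theta/\norm{\theta}_2$ and equals $(\rho/\sqrt{3})\norm{\theta}_2$. This gives $\kappa=\rho/\sqrt{3}$.

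For the first clause, fix $x\in\cX'_p$, set $\hat{x}=x/\norm{x}_2$, and let $U$ be a uniformly random unit vector in the $(p-1)$-dimensional subspace $\hat{x}^{\perp}$. I would set $z = x + \rho\sqrt{2/3}\,U$ and define $\P_x$ to be the law of $z$. Three short checks then suffice. First, since $U\perp\hat{x}$ and $\norm{U}_2=1$, $\norm{z}_2^2 = \rho^2/3 + 2\rho^2/3 = \rho^2$, so $z\in\partial\Ball(\rho)\subseteq\cX_p$. Second, $\E_x(U)=0$ by symmetry, so $\E_x(z) = x$. Third, using the rotational identity $\E_x(UU^\sT) = (\id_p - \hat{x}\hat{x}^\sT)/(p-1)$ and the vanishing of the cross term,
\[
\E_x(zz^\sT) \;=\; \frac{\rho^2}{3}\,\hat{x}\hat{x}^\sT \;+\; \frac{2\rho^2}{3(p-1)}\,(\id_p - \hat{x}\hat{x}^\sT).
\]
The two distinct eigenvalues of the right-hand side are $\rho^2/3$ and $2\rho^2/(3(p-1))$, and for $p\ge 2$ both dominate $2\rho^2/(3p)$, which yields $\E_x(zz^\sT)\mge(\gamma/p)\,\id_p$ with $\gamma = 2\rho^2/3$.

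No serious obstacle arises; the construction is essentially forced by the requirement that $z$ remain on a sphere inside $\cX_p$ while having mean of norm $\kappa$. The only mild subtlety lies in calibrating $\kappa$ against $\gamma$: writing $\kappa = c\rho$ and placing $z$ on $\partial\Ball(\rho)$ produces eigenvalues $c^2\rho^2$ and $(1-c^2)\rho^2/(p-1)$, and the choice $c = 1/\sqrt{3}$ is exactly what simultaneously (i) keeps both eigenvalues above $2\rho^2/(3p)$ for every $p\ge 2$ and (ii) maximizes the product $\kappa^2\gamma = c^2(1-c^2)\rho^4$ that drives the constant in the cumulative-reward lower bound of Theorem~\ref{thm:optrew}.
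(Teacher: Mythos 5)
Your proof is correct and follows essentially the same route as the paper: both take $\cX'_p=\partial\Ball(\rho/\sqrt{3})$ and perturb $x$ by a mean-zero vector of length $\rho\sqrt{2/3}$ orthogonal to $x$, the only (immaterial) difference being that you sample a unit vector uniformly in $\hat{x}^{\perp}$ (second moment $(\id_p-\hat{x}\hat{x}^\sT)/(p-1)$, support on $\partial\Ball(\rho)$) while the paper uses $\proj_x u$ with $u$ uniform on the full sphere (second moment $\proj_x/p$, support in $\Ball(\rho)$), and both verifications need $p\ge 2$, which the paper assumes throughout. Only your closing heuristic is off as stated ($c^2(1-c^2)$ is maximized at $c=1/\sqrt{2}$, not $1/\sqrt{3}$), but it plays no role in the verification of the lemma.
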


The last lemma does not cover the interesting case in which $\cX_p$ is
finite. The next result shows however that, for Assumption
\ref{assm:1}.2 to hold it is sufficient that the closure of the convex
hull of $\cX'_p$, denoted by $\cconv(\cX'_p)$,  contains a ball.
\begin{proposition}\label{prop:HullContainsBall}
Assumption \ref{assm:1}.2 holds if and only if
$\Ball(\kappa)\subseteq \cconv(\cX'_p)$.
\end{proposition}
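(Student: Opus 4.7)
The plan is to translate Assumption \ref{assm:1}.2 into a statement about support functions and then apply standard convex duality. Recall that the support function of a nonempty set $S\subseteq\reals^p$ is $h_S(\theta)\equiv\sup_{x\in S}\<x,\theta\>$. The three facts I will use are: (i) $h_{\Ball(\kappa)}(\theta)=\kappa\|\theta\|_2$, immediate from Cauchy--Schwarz; (ii) $h_S=h_{\cconv(S)}$ for any bounded $S$, since the supremum of a linear functional over $S$ is unchanged by taking convex combinations and by passing to the closure; (iii) for two nonempty closed convex sets $A,B\subseteq\reals^p$, $A\subseteq B$ if and only if $h_A(\theta)\le h_B(\theta)$ for every $\theta\in\reals^p$, a standard consequence of the separating hyperplane theorem.

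With these facts the proposition is a one-line equivalence. Assumption \ref{assm:1}.2 says exactly that $h_{\cX'_p}(\theta)\ge\kappa\|\theta\|_2$ for all $\theta\in\reals^p$. By (i) and (ii) this is the same as
\begin{equation*}
h_{\cconv(\cX'_p)}(\theta)\;\ge\;h_{\Ball(\kappa)}(\theta)\qquad\text{for every }\theta\in\reals^p.
\end{equation*}
Both $\cconv(\cX'_p)$ and $\Ball(\kappa)$ are nonempty closed convex subsets of $\reals^p$, so (iii) turns this inequality into the set inclusion $\Ball(\kappa)\subseteq\cconv(\cX'_p)$, which is the desired conclusion.

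For the convenience of a reader who prefers to see the two directions explicitly: for the ``if'' direction, given any $\theta\ne 0$ the point $\kappa\theta/\|\theta\|_2$ lies in $\Ball(\kappa)\subseteq\cconv(\cX'_p)$, so can be approximated by convex combinations of elements of $\cX'_p$; since $\<\kappa\theta/\|\theta\|_2,\theta\>=\kappa\|\theta\|_2$ and an average cannot exceed its maximum, $\sup_{x\in\cX'_p}\<x,\theta\>\ge\kappa\|\theta\|_2$. For the ``only if'' direction, if some $x_0\in\Ball(\kappa)$ were not in $\cconv(\cX'_p)$, the separating hyperplane theorem would produce a $\theta$ with $\<x_0,\theta\>>h_{\cconv(\cX'_p)}(\theta)=\sup_{x\in\cX'_p}\<x,\theta\>$; but $\<x_0,\theta\>\le\kappa\|\theta\|_2$, contradicting the assumption.

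There is no real obstacle; the only subtlety is being careful that $\cconv(\cX'_p)$ is used rather than $\conv(\cX'_p)$ so that fact (iii) applies directly, and that the supremum in Assumption \ref{assm:1}.2 remains unchanged under the closed convex hull operation. Everything else is routine convex analysis.
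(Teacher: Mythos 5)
Your proof is correct. The forward direction is the same as the paper's: the support function is unchanged by passing to the convex hull and its closure, and one evaluates it at $\kappa\theta/\norm{\theta}$. The converse, however, is organized differently. The paper argues by contradiction through the maximal inscribed ball: it sets $\kappa_0=\sup\{\rho:\Ball(\rho)\subseteq\conv(\cX'_p)\}$, produces a contact point $x_0\in\partial\,\cconv(\cX'_p)$ with $\norm{x_0}=\kappa_0$, and applies the supporting hyperplane theorem at $x_0$ to get a direction $\theta=x_0/\norm{x_0}$ along which $\sup_{x\in\cX'_p}\<x,\theta\>\le\kappa_0<\kappa$. You instead strictly separate an arbitrary point of $\Ball(\kappa)\setminus\cconv(\cX'_p)$ from the closed convex hull, which is exactly the standard fact that inclusion of closed convex sets is equivalent to pointwise domination of support functions. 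Your route is a bit cleaner: it avoids having to exhibit the contact point $x_0$ at distance exactly $\kappa_0$ (a step the paper asserts with only a parenthetical justification) and needs nothing beyond strict separation of a point from a closed convex set. The only hygiene points, which you essentially note, are that $\cX'_p$ is nonempty and bounded (it sits inside $\Ball(1)$), so the support-function identities apply, and that the case $\theta=0$ is trivial; with those remarks the argument is complete.
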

In other words, Assumption  \ref{assm:1}.2 is satisfied if $\cX'_p$ is
`spread out' in all directions around the origin.

Finally, we consider a concrete example with $\cX_p$ finite. 
Let $x_1,x_2,\dots,x_M$ to be i.i.d. uniformly random in $\Ball(1)$. We then refer to the set of
arms $\cX_p \equiv \{x_1,x_2,\dots,x_M\}$ as to a
\emph{uniform cloud}.
\begin{proposition}\label{prop:UniformCloud}
A uniform  cloud $\cX_p$ in dimension $p \ge 20$ satisfies Assumption \ref{assm:1} with
$M = 8^p$, $\kappa=1/4$ and $\gamma=1/32$ with probability larger than $1-2\exp(-p)$.
\end{proposition}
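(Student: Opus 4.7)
The plan is to verify the two clauses of Assumption \ref{assm:1} separately for $\cX'_p = \cX_p$, budgeting each clause a failure probability of $\exp(-p)$.

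For clause 2 (the spread condition), Proposition \ref{prop:HullContainsBall} reduces matters to showing $\Ball(1/4) \subseteq \cconv(\cX_p)$, equivalently $\sup_{x \in \cX_p} \<x, \theta\> \geq 1/4$ for every $\theta$ on the unit sphere $S^{p-1}$. I would use a standard $\epsilon$-net argument with $\epsilon = 1/8$. For a fixed unit $\theta$, the projection $\<X,\theta\>$ under $X \sim \mathrm{Unif}(\Ball(1))$ has density proportional to $(1-u^2)^{(p-1)/2}$ on $[-1,1]$, and integrating over a thin slab above $u = 3/8$ yields
\[
q \,:=\, \P\bigl(\<X,\theta\> \geq 3/8\bigr) \,\gtrsim\, p^{-1/2}\,(55/64)^{p/2}.
\]
Since $8\sqrt{55/64} = \sqrt{55} > 7$, one has $Mq \gtrsim p^{-1/2}\, 7^p$, so $(1-q)^M \leq \exp(-Mq)$ is doubly exponentially small. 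A union bound over a $(1/8)$-net of $S^{p-1}$ (of cardinality at most $24^p$), followed by a $1$-Lipschitz extension from the net back to $S^{p-1}$ (which costs $1/8$), yields $\sup_{x \in \cX_p} \<x,\theta\> \geq 1/4$ for every $\theta \in S^{p-1}$.

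For clause 1 (the dispersion condition), for each $x \in \cX_p$ I need a distribution $\P_x$ on $\cX_p$ with $\E_x(z) = x$ and $\E_x(zz^\sT) \mge (1/(32p))\id_p$. My construction is a uniform mixture over $2p$ approximately antipodal pairs around $x$. By running the net-and-concentration argument uniformly in $x \in \cX_p$ (which costs an extra union-bound factor $|\cX_p|=8^p$ absorbed by the super-exponential per-$x$ decay), with probability $\geq 1-\exp(-p)$ the following holds: for every $x \in \cX_p$ and every direction $e_j$ of any fixed orthonormal basis there exist arms $y_j^\pm(x) \in \cX_p$ with midpoint $(y_j^+(x)+y_j^-(x))/2$ within Euclidean distance $O(1/\sqrt p)$ of $x$ and with $\<y_j^+(x)-y_j^-(x),\, e_j\> \geq \tau$ for a universal $\tau>0$. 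The symmetric mixture $\tilde\P_x = (2p)^{-1}\sum_{j=1}^p(\delta_{y_j^+(x)}+\delta_{y_j^-(x)})$ then has mean $x + O(1/\sqrt p)$ and second moment $\mge (\tau^2/(2p))\id_p - O(1/\sqrt p)$. To restore the mean exactly to $x$, one mixes $\tilde\P_x$ with a small-weight corrective distribution $\mu_x$ supported on $\cX_p$; the feasibility of $\mu_x$ with a mean sufficiently near $x$ follows from clause 2, and the mixing weight is chosen small enough to preserve the second-moment bound up to a constant factor, yielding $\gamma = 1/32$.

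The principal obstacle lies in the dispersion argument. First, the uniform-in-$x$ small-ball statement forces the cloud size $M = 8^p$ exactly: the $8^p$ union bound over $x \in \cX_p$ is only absorbed because each fixed-$x$ failure event has super-exponentially small probability when $M = 8^p$. Second, enforcing the mean condition $\E_{\P_x}(z) = x$ \emph{exactly} is delicate because most points $x \in \cX_p$ satisfy $\|x\| \approx 1 - O(1/p)$ in high dimension, leaving little slack to place the support of $\P_x$ outside a thin shell around $\partial\Ball(1)$; the correction step must therefore target a mean close to $x$ (on the scale of the midpoint error $O(1/\sqrt p) \ll 1$), and clause 2 serves a second time as the feasibility lemma guaranteeing that such a $\mu_x$ exists on $\cX_p$.
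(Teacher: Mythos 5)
Your treatment of clause 2 (the $\epsilon$-net plus small-ball estimate for $\sup_{x}\<x,\theta\>$) is fine, but the overall approach breaks down at clause 1 because you set $\cX'_p=\cX_p$. Assumption \ref{assm:1} only asks for \emph{some} subset $\cX'_p\subseteq\cX_p$, and this freedom is not optional here: for an arm $x$ with $\norm{x}=1-\eps$, \emph{any} distribution supported in $\Ball(1)$ (hence on $\cX_p$) with mean exactly $x$ satisfies, writing $z=x+w$ with $\E w=0$, the constraint $2\<x,w\>+\norm{w}^2\le 1-\norm{x}^2$, so $\E\norm{w}^2\le 2\eps$; consequently the smallest eigenvalue of $\E(zz^\sT)$ restricted to $x^\perp$ is at most $2\eps/(p-1)$, which is below $\gamma/p=1/(32p)$ as soon as $1-\norm{x}\lesssim 1/64$. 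Since an overwhelming fraction of a uniform cloud lies in that outer shell (typically $1-\norm{x}=O(1/p)$, as you yourself note), clause 1 with $\cX'_p=\cX_p$ is geometrically impossible, not merely hard to prove, and no corrective mixture $\mu_x$ can restore it: the obstruction above applies to every distribution with mean exactly $x$. Your intermediate claim also fails concretely: two points of $\Ball(1)$ at distance $\tau$ have midpoint of norm at most $\sqrt{1-\tau^2/4}\approx 1-\tau^2/8$, so for near-boundary $x$ the midpoint cannot be within $O(1/\sqrt{p})$ of $x$ unless $\tau=O(p^{-1/4})$, which then yields second moment $O(p^{-3/2})\ll \gamma/p$.

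The missing idea, and what the paper does, is to take $\cX'_p=\cX_p\cap\Ball(1/2)$, leaving slack $\delta=1/2$ between $\cX'_p$ and $\partial\Ball(1)$. For each $x_i\in\cX'_p$ one uses the roughly $M\delta^p$ arms of $\cX_p$ inside $\Ball(x_i;\delta)$, which conditionally are i.i.d.\ uniform there, and places explicit weights $w_j\propto 1-u_j^\sT Q^{-1}\bar u$ (with $u_j$ the recentered arms, $\bar u$ their mean, $Q$ their empirical second moment) so the mean is exactly $x_i$; matrix Chernoff for $Q$ and a vector-martingale bound for $\bar u$ show the weights are nearly uniform, giving $\E_x(zz^\sT)\mge (\delta^2/8p)\id_p=(1/32)\id_p/p$, with the $8^p$-type union bound over arms absorbed as you anticipated. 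The price is that clause 2 must then be proved for the \emph{restricted} set $\cX'_p$, which the paper does by a net argument over the sphere using the representation of the uniform law on $\Ball(\rho)$ (Barthe et al.), yielding $\kappa=1/4$. If you redo your clause-2 computation for arms confined to $\Ball(1/2)$ and adopt some version of this neighborhood construction for clause 1, your outline can be repaired, but as written the central construction does not go through.
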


\section{Numerical results}\label{sec:Num}

\begin{figure}[t]
\includegraphics[width=0.47\textwidth]{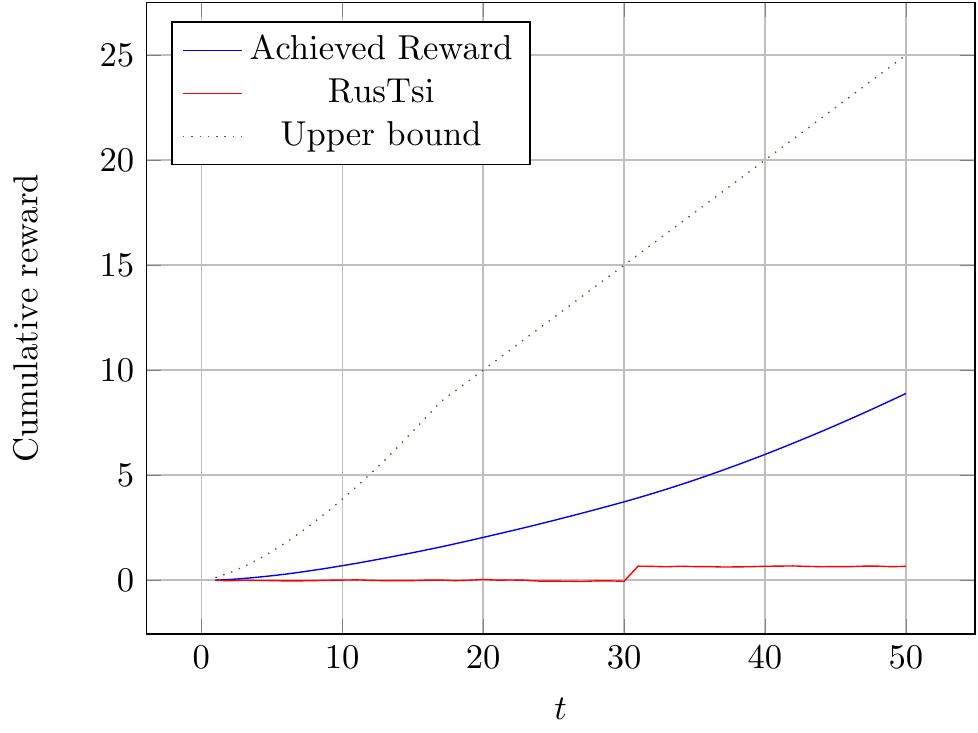}
\includegraphics[width=0.47\textwidth]{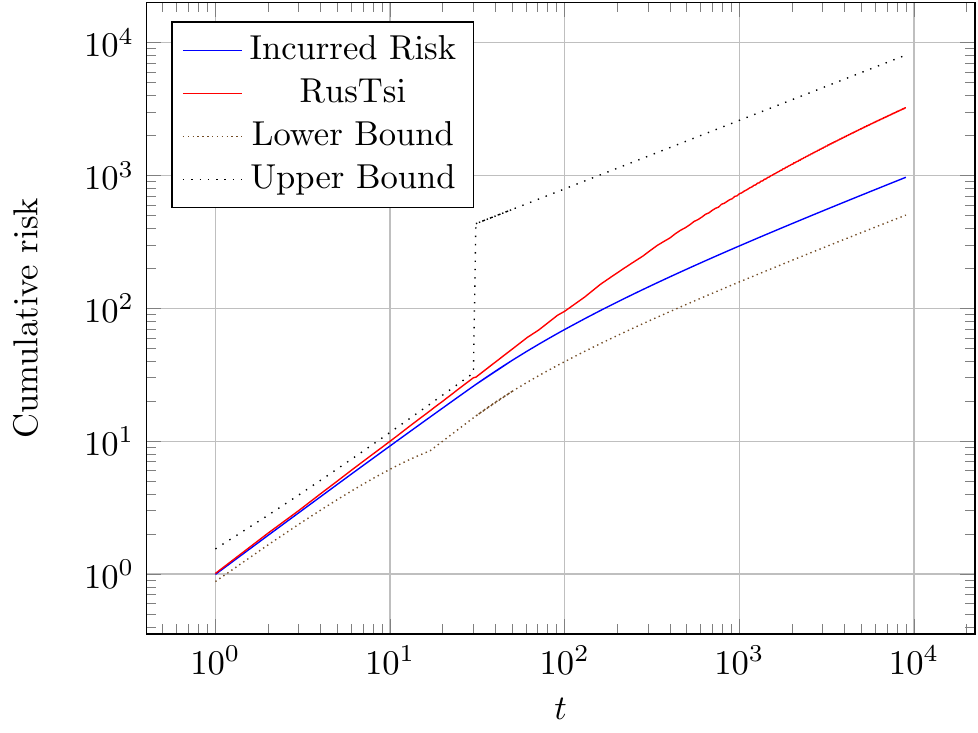}
\caption{Left frame: Cumulative reward $R_t$ in the data poor regime
  $t\lesssim 2\,p\Delta$ (here $p=30$, $\Delta=1$) as obtained through numerical simulations over
  synthetic data, together with analytical upper bound. Right frame:
  Cumulative risk in the data rich regime $t\gg p\Delta$ (again, $p=30$, $\Delta=1$).}\label{fig:reward}
\end{figure}

We will mainly compare our results with those of \cite{RusTsi10} since
the results of that paper directly apply to the present problem. The authors proposed a phased 
exploration/exploitation policy, wherein they separate the phases of 
learning the parameter $\theta$ (exploration) and earning reward based on the 
current estimate of $\theta$ (exploitation). 

In Figure \ref{fig:reward} we plot
the cumulative reward and the cumulative risk incurred by our policy and the 
phased policy, as well as analytical bounds thereof. 
We  generated $\theta\sim \normal(0,\id_{p})$  randomly for
$p=30$, and produced observations $y_t$, $t\in\{1,2,3,\dots\}$
according to the general model
(\ref{eq:GeneralModel}) with $\Delta= p\sigma^2=1$ and arm set $\cX_p=\Ball(1)$.
The curves presented here are averages over $n=5000$ realizations and
statistical fluctuations are negligible.

The left frame illustrates the performance of \OurStrategy  in the data
poor (high-dimensional) regime $t\lesssim 2\, p\Delta$. 
We compare the cumulative reward $R_t$ as achieved in simulations, 
with that of the phased policy of \cite{RusTsi10}
and  with the theoretical upper bound of Theorem \ref{thm:optrew} (and
Theorem \ref{thm:upperbndlar} for $t>p\Delta$). 
In the right frame  we consider instead the data rich
(low-dimensional) regime $t\gg p\Delta$. In this case it is more
convenient to plot the cumulative risk $t\ropt-R_t$.
We plot the curves corresponding to the ones in the left frame, as well as
the upper bound (lower bound on the reward) from Theorems \ref{thm:optrew}
and \ref{thm:optrewlar}.

Note that the $O(\sqrt{pt})$ behavior of the  risk of the phased policy 
can be observed only for $t\gtrsim 1000$. On the other hand, our policy displays the 
correct behavior for both time scales. The extra $\omega (p) = \Theta(1/p)$ factor in the
exponent yields a multiplicative factor larger than $2$ only for $t \ge
2^{2(p+2)}\approx 2\cdot 10^{19}$.

\begin{figure}[t]
	\begin{center}
		\includegraphics[width=0.46\textwidth]{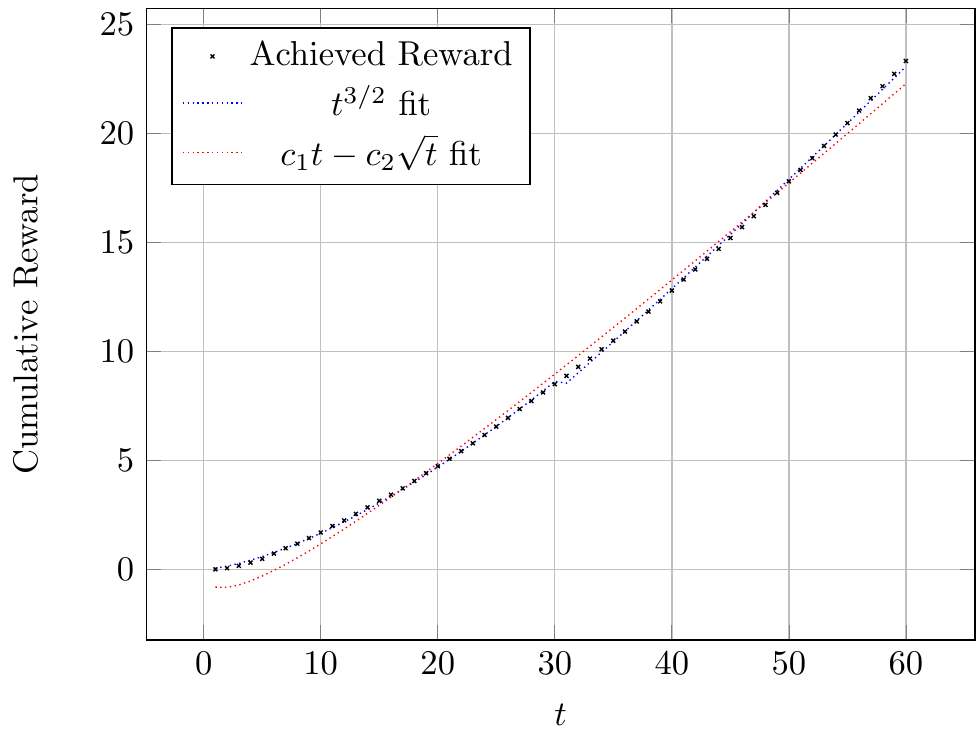}
		\hfill
		\includegraphics[width=0.46\textwidth]{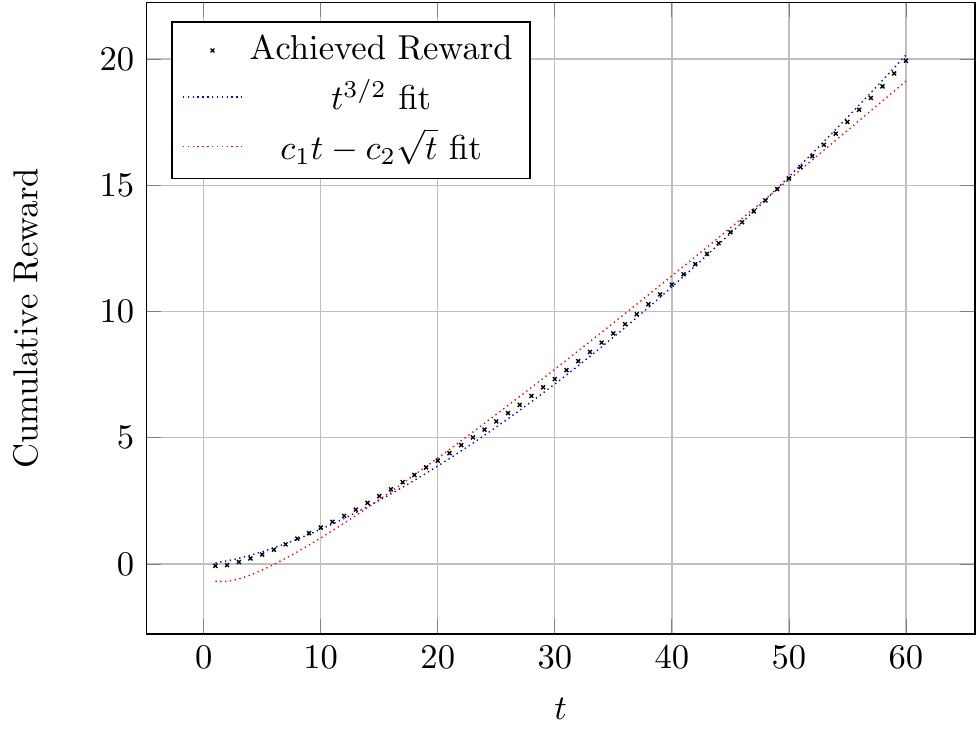}
	\end{center}
	\caption{Results using the Netflix (left frame) and MovieLens 1M (right frame) datasets.
          \OurStrategy is effective in learning the user's preferences
	  and is well described by the predicted behavior of Theorem \ref{thm:optrew}.}
	\label{fig:netflshort}
\end{figure}

The above set of numerical experiments used $\cX_p=\Ball(1)$. For
applications to recommendation systems, $\cX_p$ is in correspondence
with a certain catalogue of achievable products or contents. In
particular, $\cX_p$ is expected to be finite. It is therefore
important to check how does \OurStrategy perform for a  realistic sets of
arms.  We plot results obtained with the Netflix Prize dataset and the MovieLens 1M dataset in Figure \ref{fig:netflshort}. Here the 
feature vectors $x_i$'s for movies are obtained using the matrix
completion algorithm of \cite{KMO09noise}. The user parameter vectors $\theta_u$
were obtained by regressing the rating against the movie feature
vectors (the average user rating $a_u$ was subtracted). Similar to synthetic data, we took $p=30$. 
Regression also yields an estimate for the noise variance which is assumed known 
in the algorithm. 
We then simulated an interactive scenario by postulating that the
rating of user $u$ for movie $i$ is given by
\begin{align*}
\widetilde{y}_{i,u} = {\sf Quant}(a_u+\<x_,\theta_u\>)\, ,
\end{align*}
where ${\sf Quant}(z)$ quantizes $z$ to to $\{1, 2, \cdots, 5\}$
(corresponding to a one-to-five star rating).  
The feedback used for our simulation is the centered rating  $y_{i,u} =
\widetilde{y}_{i,u}-a_u$.

We implement a slightly modified version of \OurStrategy for these
simulations. At each time we compute the ridge regression estimate
of the user feature vector $\htheta_t$ as before and choose the ``best'' movie
$\tilde x_t = \arg \max _{x \in \cX_p } \<x, \htheta_t\>$ assuming 
our estimate is error free. We then construct the ball in
$\reals^p$ with center $\tilde x_t$ and radius
$\beta_t$. We list all the movies whose feature vectors fall in this
ball, and recommend a uniformly randomly chosen one in this list. 

Classical bandit theory implies the reward behavior is of the type $c_1 t - c_2 \sqrt{t}$ 
where $c_1$ and $c_2$ are (dimension-dependent) constants. Figure
\ref{fig:netflshort} presents the best fit of this type for $t\lesssim
2p$. The description appears to be qualitatively incorrect in this regime.
Indeed, in this regime,  the reward behavior is better explained by a $c_3t^{3/2}$ curve.
These results suggest that our policy is fairly robust to the significant
modeling uncertainty inherent in the problem. In particular, the fact that
the ``noise'' encountered in practice is manifestly non-Gaussian does not
affect the qualitative predictions of Theorem \ref{thm:optrew}.

A full validation of our approach would require an actual interactive
realization of a recommendation system
\cite{OurImplementationInteractive}.
Unfortunately, such validation cannot be provided by existing
datasets, such as the ones used here.
A naive approach would be to use the actual ratings as the feedback $y_{iu}$, but 
this suffers from many shortcomings.
First of all, each user rates a sparse  subset (of the order of $100$ movies) of the
whole database of movies, and hence any policy to be tested  would be
heavily constrained and distorted. Second, the set of rated movies is a biased subset (since it is selected
by the user itself).

\section{Proof of Theorem \ref{thm:optrew}}\label{sec:Proof}
We begin with some useful notation. Define the $\sigma$-algebra 
$\cF_t \equiv \sigma(\{y_\ell, x_\ell\}_{\ell = 1}^t)$. Also let 
$\cG_t \equiv \sigma(\{y_\ell\}_{\ell=1}^{t-1}, \{x_\ell\}_{\ell=1}^t)$.
We let $\htheta_t$ and $\Sigma_t$ denote the posterior mean and covariance of $\theta$
given $t-1$ observations. Since $\theta$ is Gaussian and the 
observations are linear, it is a standard result that these can 
be computed as:
\begin{align*}
	\Sigma_t &\equiv \Cov(\theta\vert\cF_{t-1}) = \left(p\id_p + \frac{1}{\sigma^2}\sum_{\ell=1}^{t-1} x_\ell x_\ell^\sT\right)^{-1}\\ 
	\htheta_t &\equiv \E(\theta\vert\cF_{t-1}) =  \Sigma_t\left( \sum_{\ell=1}^{t-1}\frac{ y_\ell}{\sigma^2} x_\ell \right).
\end{align*}
Note that since $\theta$ is Gaussian and the measurements are 
linear the posterior mean coincides with the maximum likelihood
estimate for $\theta$. This ensures our notation is consistent.
\subsection{Upper bound on reward} 

At time $\ell$, the expected reward $r_\ell = \E(\<x_\ell, \theta\>) \le \E(\norm{\htheta_\ell}) \leq  
\bigl[\E(\norm{\htheta_\ell}^2)\bigr]^{1/2}$, where the first inequality follows 
from Cauchy-Schwarz, that $\htheta_\ell$ is unbiased and that $\norm{x_\ell} \le 1$. 
Since $1 = \E(\norm\theta^2) = \E(\norm{\htheta_\ell}^2) + \E(\tr{\Sigma_\ell})$: 
\begin{align}
	r_\ell^2 \leq 1 - \E \left(\tr(\Sigma_\ell) \right).
	\label{eqn:rew}
\end{align}
	We have, applying Jensen's inequality and further simplification:
\begin{align*}
	\E\tr(\Sigma_\ell) &\ge p^2/\E(\tr(\Sigma_\ell^{-1})) \\
	&= p^2/\E\tr\left(p\id_p + \frac{1}{\sigma^2}\sum_{j=1}^{\ell-1}x_j x_j^\sT \right) \\
	&\ge \left(1 + \frac{\ell-1}{p^2\sigma^2}\right)^{-1}.
\end{align*}
Using this to bound the right hand side of \myeqref{eqn:rew}
\begin{align*}
		r_\ell^2 &\leq 1 - \frac{1}{1+ (\ell-1)/(p\sigma)^2} \\
		&= \frac{(\ell-1)/p}{(\ell-1)/p + {p\sigma^2}} \\
		&\leq \frac{1}{p\sigma^2}\frac{\ell-1}{p}.
\end{align*}
	The cumulative reward can then be bounded as follows:
\begin{align*}
	\sum_{\ell=1}^{t}r_\ell &\le \frac{1}{\sqrt{p\sigma^2}} \sum_{\ell=1}^{t} \sqrt{\frac{\ell-1}{p}} \\
	&\leq \frac{2}{3\sqrt{p\sigma^2}} t^{3/2}p^{-1/2}\\
	&= C_2(\Delta)t^{3/2}p^{-1/2}.
\end{align*}
Here we define $C_2(\Delta) \equiv 2/3\sqrt\Delta$. 
\subsection{Lower bound on reward}
We compute the expected reward earned by \OurStrategy at time $t$ as:

\begin{align}\label{eqn:exprew}
	r_t &= \E(\<x_t, \theta\>) \nonumber\\
	&= \E(\E(\<x_t, \theta\> \vert \cG_{t-1})) \nonumber\\
	&= \E(\E(\<x_t, \htheta_t\> \vert \cG_{t-1})) \nonumber\\
	&= \E(\<\xtil_t, \htheta_t\>) \nonumber\\
	&\ge \kappa\, \E(\norm{\htheta_t}).
\end{align}

The following lemma guarantees that $\norm{\htheta_t}$ is $\Omega(\sqrt{t})$. 

\begin{lemma}\label{lem:1stmom}
	Under the conditions of Theorem \ref{thm:optrew} we have, for all $t > 1$:
	\begin{align*}
		\E\norm{\htheta_t} &\ge C'(\gamma, \Delta) t^{1/2}p^{-1/2}.
	\end{align*}
	Here:
	\begin{align*}
		C'(\gamma, \Delta) &= \frac{1}{2} \frac{C(\gamma, \Delta)}{\alpha(\gamma, \Delta)}\sqrt\frac{\Delta}{8},  \\%
		\textrm{where }\quad	C(\gamma, \Delta) &= \frac{\gamma}{4(\Delta+1)} \\
		\alpha(\gamma, \Delta) &= 1 + \left[ 3 \log \left( \frac{96}{\Delta C(\gamma, \Delta)} \right)\right]^{1/2}.
	\end{align*}
\end{lemma}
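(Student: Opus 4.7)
My plan is a two-step argument: first lower bound the second moment $\E\|\htheta_t\|^2$ by tracking the posterior covariance through a Sherman-Morrison recursion, then upgrade to a first-moment bound via a H\"older / truncation argument.

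For the second moment, use the Bayesian identity $\E\|\theta\|^2 = \E\|\htheta_t\|^2 + \E\tr(\Sigma_t) = 1$ (valid because $\theta$ is Gaussian and the likelihood is linear Gaussian), reducing the task to upper bounding $\E\tr(\Sigma_t)$. The Sherman-Morrison formula applied to $\Sigma_{t+1}^{-1} = \Sigma_t^{-1} + \sigma^{-2} x_t x_t^{\sT}$ and taking the trace gives
\[
\tr(\Sigma_t) - \tr(\Sigma_{t+1}) = \frac{\|\Sigma_t x_t\|^2}{\sigma^2 + x_t^{\sT} \Sigma_t x_t}.
\]
Conditional on $\cF_{t-1}$, the policy \OurStrategy samples $x_t \sim \P_{\xtil_t}$, and Assumption~\ref{assm:1}.1 yields $\E[x_t x_t^{\sT} \mid \cF_{t-1}] \succeq (\gamma/p)\id_p$. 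Combined with $x_t^{\sT}\Sigma_t x_t \le 1/p$ (using $\Sigma_t \preceq \id_p/p$, from $\Sigma_t^{-1}\succeq p\,\id_p$) and Cauchy-Schwarz $\tr(\Sigma_t^2) \ge \tr(\Sigma_t)^2/p$, this produces the scalar recursion $a_{t+1} \le a_t - c\,a_t^2$ for $a_t = \E\tr(\Sigma_t)$, where $c = \gamma/(p(\Delta+1))$ (Jensen handles the passage $\E\tr(\Sigma_t)^2 \ge a_t^2$). The standard trick $1/a_{t+1} \ge 1/a_t + c$ then gives $a_t \le 1/(1 + c(t-1))$, hence $\E\|\htheta_t\|^2 \ge c(t-1)/2$ throughout $c(t-1) \le 1$. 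Since $\gamma \le 1$, this comfortably covers $t \le p\Delta$ and delivers $\E\|\htheta_t\|^2 \gtrsim C(\gamma,\Delta) t/p$ up to absolute constants.

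To pass to the first moment, apply H\"older's inequality
\[
\E\|\htheta_t\| \ge \frac{(\E\|\htheta_t\|^2)^{3/2}}{(\E\|\htheta_t\|^4)^{1/2}},
\]
so it suffices to bound $\E\|\htheta_t\|^4$ by a controlled multiple of $(\E\|\htheta_t\|^2)^2$. Using $\htheta_t = \theta - \xi_t$ with $\xi_t \mid \cF_{t-1} \sim \normal(0,\Sigma_t)$ and the elementary $\|\htheta_t\|^4 \le 8(\|\theta\|^4 + \|\xi_t\|^4)$, standard $\chi^2$ moment identities give $\E\|\theta\|^4 \le 1 + 2/p \le 2$ and $\E[\|\xi_t\|^4 \mid \cF_{t-1}] \le 3\tr(\Sigma_t)^2 \le 3$. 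This absolute fourth-moment bound is too weak when $\E\|\htheta_t\|^2$ itself is small, so the fix is a truncation at $\|\theta\| \le \alpha$: the contribution from $\{\|\theta\| > \alpha\}$ is controlled by Gaussian concentration (since $\|\theta\|$ is $1/\sqrt{p}$-Lipschitz in the standard Gaussian coordinates, $\prob(\|\theta\|>\alpha) \le \exp(-(\alpha-1)^2/3)$), while on the bulk $\|\theta\|$ contributes at most $\alpha$. Choosing $\alpha = 1 + [3\log(96/(\Delta C))]^{1/2}$ precisely balances the tail against the main term, and careful bookkeeping yields $\E\|\htheta_t\| \ge (C/(2\alpha))\sqrt{\Delta/8}\sqrt{t/p}$, matching the claimed $C'(\gamma,\Delta)\sqrt{t/p}$.

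The Sherman-Morrison step is routine; the main obstacle lies in the passage from the second to the first moment. Although $\htheta_t$ is the posterior mean of a Gaussian prior under a Gaussian likelihood---structure that would make moment computations transparent under a non-adaptive policy---the dependence of $x_t$ on past estimates of $\theta$ breaks exact Gaussianity of $\htheta_t$, so the fourth-moment / truncation argument must be tracked with care. The $\sqrt{\log(1/(\Delta C))}$ shape of $\alpha(\gamma,\Delta)$ reflects the balance between shrinking the Gaussian tail of $\|\theta\|$ (favouring large $\alpha$) and keeping the factor $1/\alpha$ useful (favouring small $\alpha$), while the $\sqrt{\Delta/8}$ factor in $C'$ arises from the noise scale within the truncated computation.
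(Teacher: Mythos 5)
Your second-moment step is correct and is a genuinely different (and arguably cleaner) route than the paper's: rather than tracking the growth of $\norm{\htheta_t}^2$ through its martingale increments and lower bounding $\E[\tr(\Sigma_t^2)]$ by a determinant/exponential argument, you use the identity $\E\norm{\htheta_t}^2 = 1-\E[\tr(\Sigma_t)]$ and the Sherman--Morrison trace recursion, then Assumption \ref{assm:1}.1, $\Sigma_t\mle \id_p/p$, Cauchy--Schwarz and Jensen to obtain $a_{t+1}\le a_t - c\,a_t^2$ with $a_t=\E\tr(\Sigma_t)$ and $c=\gamma/(p(\Delta+1))$, hence $a_t\le (1+c(t-1))^{-1}$ and $\E\norm{\htheta_t}^2\ge c(t-1)/2$ on the range $c(t-1)\le 1$, which covers $t\le p\Delta$ and reproduces (even slightly improves) the constant $C(\gamma,\Delta)$ of Lemma \ref{lem:2ndmom}.

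The passage from the second to the first moment, however, has a genuine gap. For your H\"older step to give the $\sqrt{t/p}$ rate you need $\E\norm{\htheta_t}^4\lesssim (\E\norm{\htheta_t}^2)^2\asymp (t/(p\Delta))^2$, and neither the crude bound $\norm{\htheta_t}^4\le 8(\norm{\theta}^4+\norm{\theta-\htheta_t}^4)$ nor its truncated version can deliver this: in the data-poor regime both $\norm{\theta}$ and the posterior error $\norm{\theta-\htheta_t}$ are of order one, and $\norm{\htheta_t}$ is small only through cancellation between them, which any such term-by-term bound destroys. The proposed fix, truncating at a constant level $\{\norm{\theta}\le\alpha\}$, does not address this: the obstruction is not the Gaussian tail of $\norm{\theta}$ (which in any case concentrates with exponent proportional to $p$, not $e^{-(\alpha-1)^2/3}$), and on the bulk event you cannot bound $\norm{\htheta_t}$ by $\alpha$ either, since $\htheta_t$ contains the noise contribution $\Sigma_t\sum_\ell z_\ell x_\ell/\sigma^2$; moreover the correct truncation level must scale with $t$, namely like $(t-1)/(p\Delta)$, not be a constant. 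What is missing is precisely an upper-tail (or fourth-moment) control of $\norm{\htheta_t}$ \emph{itself} at scale $\sqrt{(t-1)/(p\Delta)}$. The paper supplies this in Lemma \ref{lem:subgauss}: $\htheta_t=\sum_{\ell<t}\xi_\ell$ is a vector-valued martingale with increments $\xi_\ell=\sigma^{-2}(\<v_\ell,x_\ell\>+z_\ell)\Sigma_{\ell+1}x_\ell$ satisfying $\E(e^{p\Delta\norm{\xi_\ell}^2/4}\mid\cG_\ell)\le e$, so Theorem 2.1 of Juditsky--Nemirovski gives $\P\bigl(\norm{\htheta_t}\ge \sqrt{8(t-1)/(p\Delta)}\,\nu\bigr)\le e^{-(\nu-1)^2/3}$; the first-moment bound then follows by truncating $\norm{\htheta_t}^2$ at level $a=\alpha^2\,8(t-1)/(p\Delta)$ and comparing with the second-moment lower bound. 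Your final constants $\alpha(\gamma,\Delta)$ and $C'(\gamma,\Delta)$ match the statement, but the argument you sketch does not establish the concentration step they come from, so the lemma is not proved as written.
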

Using this lemma we proceed by bounding the right side of \myeqref{eqn:exprew}:
\begin{align*}
	r_t &\geq \kappa C'\sqrt\frac{t-1}{p}.
\end{align*}
Computing cumulative reward $R_t$ we have:
\begin{align*}
 R_t  &= \sum_{\ell=1}^{t} r_\ell \\ 
 &\geq \sum_{\ell = 1}^{t}\kappa C'\sqrt\frac{\ell-1}{p} \\
 &\geq \kappa C'\int_0^{t-1}\sqrt{\frac{\nu}{p}}\mathrm{d} \nu  \\
 &\geq \frac{2}{3}\kappa C' (t-1)^{3/2}p^{-1/2} \\
 &\ge \frac{\kappa C'}{3\sqrt{2}} t^{3/2}p^{-1/2}.
\end{align*}

Thus, letting $C_1(\kappa, \gamma, \Delta) = \kappa C'(\gamma, \Delta)/3\sqrt 2$, 
we have the required result.
\subsection{Proof of Lemma \ref{lem:1stmom}}
In order to prove that $\E(\norm{\htheta_t}) = \Omega(\sqrt{t})$, we will first 
show that $\E(\norm{\htheta_t}^2)$ is $\Omega(t)$. Then we prove that $\norm{\htheta_t}$ 
is sub-gaussian, and use this to arrive at the required result. 

\begin{lemma}[Growth of Second Moment]\label{lem:2ndmom}
Under the conditions of Theorem \ref{thm:optrew}:	
	\begin{align*}
		\E{\norm{\hth_t}^2} \geq C(\Delta, \gamma) \frac{t-1}{p},	
	\end{align*}
	where
	\begin{align*}
		C(\Delta, \gamma) = \frac{\gamma}{4(\Delta+1)}. 
	\end{align*}
\end{lemma}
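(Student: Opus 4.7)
\textbf{Proof plan for Lemma \ref{lem:2ndmom}.} The plan is to convert the target lower bound on $\E\|\hat\theta_t\|^2$ into an upper bound on the posterior trace, and then exhibit a self-improving one-step recursion for $\E\tr(\Sigma_t)$ driven by the exploration guarantee of Assumption~\ref{assm:1}. Since $\theta\sim\normal(0,\id_p/p)$, $\E\|\theta\|^2=1$, and by the usual bias-variance decomposition of the posterior mean,
\begin{equation*}
\E\|\hat\theta_t\|^2 \;=\; \E\|\theta\|^2 - \E\tr(\Sigma_t) \;=\; 1 - \E\tr(\Sigma_t).
\end{equation*}
Thus it suffices to show $\E\tr(\Sigma_t)\le 1 - C(\gamma,\Delta)(t-1)/p$.

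To obtain a recursion, I would apply the Sherman--Morrison formula to $\Sigma_{t+1}^{-1}=\Sigma_t^{-1}+\sigma^{-2}x_tx_t^{\sT}$ and take traces, giving the deterministic identity
\begin{equation*}
\tr(\Sigma_{t+1}) \;=\; \tr(\Sigma_t) - \frac{x_t^{\sT}\Sigma_t^2 x_t}{\sigma^2 + x_t^{\sT}\Sigma_t x_t}.
\end{equation*}
The coarse bound $\Sigma_t\preceq\id_p/p$ (immediate from $\Sigma_t^{-1}\succeq p\id_p$) and $\|x_t\|\le 1$ yield $x_t^{\sT}\Sigma_t x_t\le 1/p$, so the denominator is at most $\sigma^2+1/p=(\Delta+1)/p$. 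Now I would condition on $\cF_{t-1}$, under which $\Sigma_t$ is deterministic and $\tilde x_t$ is fixed, so $x_t\sim\P_{\tilde x_t}$ with $\E[x_tx_t^{\sT}\mid\cF_{t-1}]\succeq(\gamma/p)\id_p$ by Assumption~\ref{assm:1}.1. Hence
\begin{equation*}
\E\bigl[x_t^{\sT}\Sigma_t^2 x_t\,\big|\,\cF_{t-1}\bigr] \;=\; \tr\!\bigl(\Sigma_t^2\,\E[x_tx_t^{\sT}\mid\cF_{t-1}]\bigr) \;\ge\; \frac{\gamma}{p}\tr(\Sigma_t^2) \;\ge\; \frac{\gamma}{p^2}\bigl(\tr\Sigma_t\bigr)^2,
\end{equation*}
where the last step is Cauchy--Schwarz ($\tr\Sigma_t^2\ge(\tr\Sigma_t)^2/p$).

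Combining these two displays and taking a full expectation (using Jensen to replace $\E[(\tr\Sigma_t)^2]$ by $(\E\tr\Sigma_t)^2$), I obtain the scalar recursion $u_{t+1}\le u_t - c'\,u_t^2$, where $u_t\equiv\E\tr(\Sigma_t)$ and $c' = \gamma/\bigl(p(\Delta+1)\bigr)$. Setting $v_t = 1 - u_t \in[0,1]$ gives the equivalent inequality $v_{t+1}\ge v_t + c'(1-v_t)^2$. I would close by a short induction to show $v_t\ge C(t-1)/p$ with $C=\gamma/(4(\Delta+1))=p c'/4$: as long as $v_t\le 1/2$ the increment is at least $c'/4 = C/p$, and once $v_t>1/2$ the bound holds trivially because, in the regime under consideration ($t\le p\Delta$ and $\gamma\le 1$), $C(t-1)/p\le C\Delta\le \gamma/4\le 1/2$.

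The main obstacle is the nonlinearity: the natural one-step decrement of $\tr\Sigma_t$ is \emph{quadratic} in $\tr\Sigma_t$, which if taken at face value would only yield $\Omega(1/t)$ decay rather than the linear-in-$t$ growth of $v_t$ we need. The point that makes the argument work is that $v_t$ need only grow to roughly $1/2$ to control the entire regime $t\le p\Delta$, and on that range $(1-v_t)^2$ is bounded away from $0$, so the apparent quadratic slowdown never kicks in. A secondary subtlety is the use of Jensen's inequality to push the expectation inside the quadratic term while preserving the sign of the bound; this is the step that forces us to work with $u_t=\E\tr\Sigma_t$ as the scalar Lyapunov quantity rather than the trace itself.
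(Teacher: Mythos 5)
Your proposal is correct, but it reaches the bound by a genuinely different route than the paper. The paper works directly with $\E\norm{\htheta_t}^2$: from the recursive form of the posterior mean it derives the one-step identity $\E(\norm{\htheta_{t+1}}^2\mid\G_t)=\norm{\htheta_t}^2+\frac{x_t^\sT\Sigma_t^2x_t}{\sigma^2+x_t^\sT\Sigma_tx_t}$, lower bounds $\E[x_t^\sT\Sigma_t^2x_t\mid\cF_{t-1}]\ge(\gamma/p)\Tr(\Sigma_t^2)$ exactly as you do, but then bounds $\E[\Tr(\Sigma_t^2)]\ge p\,\E[\det(\Sigma_t^2)^{1/p}]\ge\frac1p\exp\{-2(t-1)/(p\Delta)\}$ by an AM--GM/eigenvalue argument using $\sum_{\ell\le t-1}\norm{x_\ell}^2\le t-1$, and sums the increments. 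You instead pass to $u_t=\E\Tr(\Sigma_t)$ via the exact identity $\E\norm{\htheta_t}^2=1-\E\Tr(\Sigma_t)$ (which the paper itself uses in its upper-bound section), use the same Sherman--Morrison trace decrement, and close with $\Tr(\Sigma_t^2)\ge(\Tr\Sigma_t)^2/p$ plus Jensen, yielding the autonomous recursion $u_{t+1}\le u_t-c'u_t^2$ handled by a two-case induction; note that the increment the paper tracks and the decrement you track are the same quantity, by your identity. Both arguments are sound and give the same constant $\gamma/(4(\Delta+1))$: the paper's per-step bound is uniform in the history and needs no bootstrapping, while yours avoids the determinant/eigenvalue step entirely but relies on the observation that for $t\le p\Delta$ the target never exceeds $1/2$, so the quadratic slowdown never bites --- the same restriction the paper uses implicitly when it applies concavity of $1-e^{-z}$ on the interval up to $2(p\Delta-1)/(p\Delta)$. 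Two points you should make explicit: (i) the case $v_t>1/2$ uses $\gamma\le1$, which is automatic from Assumption \ref{assm:1}.1 because $\P_x$ is supported on $\cX_p\subseteq\Ball(1)$, so $\gamma\le\Tr\,\E_x(zz^\sT)=\E_x\norm{z}^2\le1$, but it deserves a line; (ii) when conditioning, state that $\Sigma_t$, $\htheta_t$, and hence $\xtil_t$ are $\cF_{t-1}$-measurable and $x_t\sim\P_{\xtil_t}$ given $\cF_{t-1}$, which is what licenses $\E[x_tx_t^\sT\mid\cF_{t-1}]\mge(\gamma/p)\id_p$.
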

\begin{proof}
We rewrite $\htheta_t$ using the following inductive form:
\begin{align}\label{eqn:indform}
	\htheta_{t+1} = \htheta_t + \Sigma_{t+1}\left( \frac{1}{\sigma^2} x_t x_t^\sT \right) v_t + \Sigma_{t+1} \frac{z_t}{\sigma^2} x_t.
\end{align}
Here $v_t \equiv \theta - \hth_t$ is a random zero mean vector. Conditional on
$\cF_{t-1}$, $v_t$ is  distributed as $\normal(0, \Sigma_t)$ and is independent 
of $x_t$ and $z_t$. Recall that the  $\sigma$-algebra $\cG_t = \sigma(\{y_\ell\}_{\ell=1}^{t-1},%
\{x_\ell\}_{\ell=1}^{t}) \supseteq \cF_{t-1}$. 
Then we have:
\begin{align}
	\E(\norm{\hth_{t+1}}{}^2 \vert \G_t) &= \norm{\hth_t}{}^2 + \frac{1}{\sigma^4}%
	\E \left[ v_t^\sT \left( \Sigma_{t+1}x_tx_t^\sT \right)^\sT \left( \Sigma_{t+1}x_tx_t^\sT \right)%
	v_t \big\vert \G_t \right] + \frac{1}{\sigma^4}\E \left[ z_t^2 \vert \G_t \right]%
	\left(\Sigma_{t+1}x_t \right)^\sT\left(\Sigma_{t+1}x_t\right) .
	\label{eqn:secondmom1}
\end{align}
The cross terms cancel since $v_t$ and $ z_t$ conditionally on $\G_t$ are 
independent and zero mean. The expectation in the second term can be reduced as follows:
\begin{align*}
	\E \left[ v_t^\sT \left( \Sigma_{t+1}x_tx_t^\sT \right)^\sT \left( \Sigma_{t+1}x_tx_t^\sT \right) v_t \big\vert \G_t \right] 
	& = \Tr \left[ \left(\Sigma_{t+1}x_tx_t^\sT \right) \Sigma_t \left( \Sigma_{t+1}x_tx_t^\sT \right)^\sT \right]\\
	&=  \Tr \left[ \left(\Sigma_{t+1}x_tx_t^\sT \right) \Sigma_t \left( x_t x_t^\sT\Sigma_{t+1}\right) \right]\\
	& = (x_t^\sT\Sigma_t x_t) \Tr \left[\Sigma_{t+1}x_tx_t^\sT \Sigma_{t+1} \right] \\
	&= \left(x_t^\sT\Sigma_t x_t \right) \left( x_t^\sT \Sigma_{t+1}^2 x_t \right).
\end{align*}
The third term can be seen to be:
\begin{align*}
	\E \left[ z_t^2 \vert \G_t \right] \left(\Sigma_{t+1}x_t \right)^\sT\left(\Sigma_{t+1}x_t\right)
	&= \sigma^2 x_t^\sT\Sigma_{t+1}^2 x_t.
\end{align*}
Thus we have, continuing \myeqref{eqn:secondmom1}:
\begin{align}
	\E(\norm{\hth_{t+1}}{}^2 \vert \G_t) &= \norm{\hth_t}{}^2 + %
	\frac{1}{\sigma^4}\left(\sigma^2 + x_t^\sT \Sigma_t x_t \right)\left( x_t^\sT \Sigma_{t+1}^2 x_t \right). \label{eqn:secondmom2}
\end{align}
Since $\Sigma_{t+1} = \left(\Sigma_t^{-1} + \frac{1}{\sigma^2}x_tx_t^\sT\right)^{-1} = %
\Sigma_t -   \Sigma_tx_tx_t^\sT\Sigma_t/ (\sigma^2 + x_t^\sT\Sigma_tx_t)$, some calculation yields that:
\begin{align*}
	x_t^\sT\Sigma_{t+1}^2x_t = \frac{\sigma^4 \left(x_t^\sT \Sigma_t^2 x_t \right)}%
	{\left(\sigma^2 + x_t^\sT\Sigma_tx_t\right)^2}.
\end{align*}
Thus \myeqref{eqn:secondmom2} reduces to 
\begin{align}
	\E(\norm{\hth_{t+1}}{}^2\vert \G_t) &= \norm{\hth_t}{}^2 + %
	\frac{x_t^\sT\Sigma_t^2x_t}{\sigma^2 + x_t^\sT\Sigma_tx_t}.
	\label{eqn:secondmom3}
\end{align}
We now bound the additive term in \myeqref{eqn:secondmom3}. 
We know that $\Sigma_t \mle  \id/p$ (the prior covariance), thus 
$x_t^\sT\Sigma_t x_t \leq 1/p$ since $x_t \in \cX_p \subseteq \Ball(1)$. 
Hence the denominator in \myeqref{eqn:secondmom3} is upper bounded by 
$\sigma^2 + 1/p$. To bound the numerator:
\begin{align*}
	\E[ x_t^\sT\Sigma_t^2 x_t \vert \F_{t-1} ]  &= %
	\E[\Tr(\Sigma_t^2 x_t x_t^\sT)\vert \F_{t-1} ] \\
	&= \Tr[\Sigma_t^2\E(x_t x_t^\sT \vert \F_{t-1})] \\
	&\ge \frac{\gamma}{p} \Tr(\Sigma_t^2), 
\end{align*}
since $\E_{\xtil_t}(x_t x_t^\sT) \mge (\gamma/p) \id_p$ by Assumption \ref{assm:1}.
Using this in \myeqref{eqn:secondmom3}, we take expectations to get:
\begin{align}
	\E(\norm{\hth_{t+1}}{}^2) &\geq \E(\norm{\hth_t}{}^2) + \frac{\gamma}{\Delta +1}\E[\Tr(\Sigma_t^2)].
	\label{eqn:secondmom4}
\end{align}
Considering the second term in \myeqref{eqn:secondmom4}:
\begin{align*}
	\E[\Tr(\Sigma_t^2)] &\geq p\,\E[\det(\Sigma_t^2)^{1/p}]\\
	&= p\, \E\left[ \left( \prod_{j = 1}^p \frac{1}{p+\lambda_j/\sigma^2}\right)^{2/p} \right],
\end{align*}
where $\lambda_j$ is the $j^{\text{th}}$ eigenvalue of 
$\sum_{\ell = 1}^{t-1} x_{\ell}x_{\ell}^\sT$. Continuing 
the chain of inequalities:
\begin{align*}
	\E[\Tr(\Sigma_t^2)] &\geq \frac{1}{p}\, \E\left[ \prod_{j=1}^{p}\left(1+\frac{\lambda_j}{\Delta}\right)^{-2/p} \right]\\
	&\geq \frac{1}{p}\,\E\left[ \prod_{j=1}^{p}\exp\left(-\frac{2\lambda_j}{p\Delta}\right) \right] \\
	&= \frac{1}{p}\, \E \left[ \exp\left\{-\frac{2}{p\Delta}\Tr\Big(\sum_{\ell = 1}^t x_\ell x_\ell ^\sT\Big)\right\} \right] \\
	&\geq \frac{1}{p} \exp\left\{-\frac{2(t-1)}{p\Delta}\right\},
\end{align*}
where the last inequality follows from the fact that 
$x_\ell \in \Ball(1)$ for each $\ell$. Combining this with 
\myeqref{eqn:secondmom4} gives:
\begin{align}
	\E(\norm{\hth_{t+1}}{}^2 &\geq \E(\norm{\hth_t}^2) + \frac{\gamma}{\Delta + 1} \frac{1}{p} \exp\left\{{-\frac{2(t-1)}{p\Delta}}\right\}.
	\label{eqn:secondmom5}
\end{align}
Summing over $t$ this implies:
\begin{align*}
	\E[\norm{\hth_t}^2] &\geq \frac{\gamma}{p(\Delta+1)}\frac{1-\exp\{2(t-1)/p\Delta\}}{1-\exp\{-2/p\Delta\}} \\
	&\geq \frac{\gamma\Delta}{2(\Delta +1)} (1-\exp\{-2(t-1)/p\Delta\}) \\
	&\geq \frac{\gamma }{2(\Delta+1)} \left( 1 - \exp\{-2(p\Delta-1)/p\Delta\} \right) \left( \frac{t-1}{p}\right) .
\end{align*}
The last inequality follows from fact that $1-\exp(-z)$ is concave in $z$. 
Using $p\Delta \geq 2$, we obtain: 
\begin{align*}
	\E(\norm{\hth_t}{}^2) &\geq \frac{\gamma (1-e^{-1})}{2(\Delta+1)} \frac{t-1}{p}\\
	&\geq \frac{\gamma}{4(\Delta+1)} \frac{t-1}{p}.
\end{align*}
\end{proof}

\begin{lemma}[Sub-Gaussianity of $\norm{\hth_t}$] \label{lem:subgauss}
	Under the conditions of Theorem \ref{thm:optrew}
\begin{align*}
	\P\left(\norm{\htheta_t}\ge \sqrt\frac{8(t-1)}{p\Delta} \nu \right) \le e^{-(\nu-1)^2/3}.
\end{align*}
\end{lemma}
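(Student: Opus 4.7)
Proof plan:

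The plan is to establish a moment-generating-function bound for $\norm{\hth_t}^2$ via a recursive conditional-Gaussian computation, and then convert it into the sub-Gaussian tail bound by Chernoff's inequality.

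First, I will decompose $\hth_t$ along its innovations. Rewriting the update \myeqref{eqn:indform} and using the fact that the one-step predictive residual $y_\ell - x_\ell^\sT \hth_\ell$ is, conditional on $\cG_\ell$, Gaussian with mean zero and variance $\sigma^2 + x_\ell^\sT \Sigma_\ell x_\ell$, I will write $\hth_t = \sum_{\ell=1}^{t-1} a_\ell \eta_\ell$ where $a_\ell \equiv \Sigma_{\ell+1} x_\ell \sqrt{\sigma^2 + x_\ell^\sT \Sigma_\ell x_\ell}/\sigma^2$ is $\cG_\ell$-measurable and $\eta_\ell \mid \cG_\ell \sim \normal(0,1)$. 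The same computation that led to \myeqref{eqn:secondmom3} then gives $\norm{a_\ell}^2 = x_\ell^\sT \Sigma_\ell^2 x_\ell/(\sigma^2 + x_\ell^\sT \Sigma_\ell x_\ell)$, and the bounds $\Sigma_\ell \mle \id_p/p$ and $\norm{x_\ell}\le 1$ yield the almost-sure estimate $\norm{a_\ell}^2 \le 1/(p^2\sigma^2) = 1/(p\Delta)$.

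Next, starting from the identity $\norm{\hth_t}^2 = \norm{\hth_{t-1}}^2 + 2\<\hth_{t-1}, a_{t-1}\>\eta_{t-1} + \norm{a_{t-1}}^2 \eta_{t-1}^2$ and using the elementary Gaussian MGF identity $\E[\exp(b\eta + c\eta^2)] = (1-2c)^{-1/2}\exp(b^2/(2(1-2c)))$ (valid for $c<1/2$) combined with $\<\hth_{t-1}, a_{t-1}\>^2 \le \norm{\hth_{t-1}}^2\norm{a_{t-1}}^2$, I will obtain the one-step recursion
\[
\E\bigl[\exp(\lambda \norm{\hth_t}^2)\mid \cG_{t-1}\bigr] \le (1-2\lambda\norm{a_{t-1}}^2)^{-1/2}\exp\!\Bigl(\tfrac{\lambda \norm{\hth_{t-1}}^2}{1-2\lambda\norm{a_{t-1}}^2}\Bigr).
\]
Replacing $\norm{a_{t-1}}^2$ by the uniform bound $1/(p\Delta)$ and iterating backwards to $\hth_1 = 0$ produces a product $\prod_{k=0}^{t-2}(1-2\lambda^{(k)}/(p\Delta))^{-1/2}$, where the inflated parameters admit the closed form $\lambda^{(k)} = \lambda/(1-2k\lambda/(p\Delta))$. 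The algebraic identity $1 - 2\lambda^{(k)}/(p\Delta) = (1-2(k+1)\lambda/(p\Delta))/(1-2k\lambda/(p\Delta))$ causes the product to telescope, producing the clean MGF bound $\E[\exp(\lambda\norm{\hth_t}^2)] \le (1-2\lambda(t-1)/(p\Delta))^{-1/2}$ on the range $\lambda\in[0,p\Delta/(2(t-1)))$, i.e.\ $\norm{\hth_t}^2$ is MGF-dominated by a $\chi_1^2$ random variable of scale $(t-1)/(p\Delta)$.

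Finally, applying Chernoff's inequality with $s = 8(t-1)\nu^2/(p\Delta)$ and the optimal choice $\lambda = (p\Delta/(2(t-1)))(1 - 1/(8\nu^2))$ will yield $\P(\norm{\hth_t}\ge\sqrt{8(t-1)/(p\Delta)}\,\nu) \le 2\sqrt{2e}\,\nu\,e^{-4\nu^2}$, and a direct elementary calculation (amounting, for $\nu\ge 1$, to $(11\nu^2+2\nu-1)/3 \ge \log(2\sqrt{2e}\,\nu)$) will show that this is bounded above by $e^{-(\nu-1)^2/3}$, as required. The main obstacle in this plan is the apparent blow-up in the MGF recursion: after one step the coefficient of $\norm{\hth_{t-1}}^2$ on the right-hand side is inflated from $\lambda$ to $\lambda/(1-2\lambda\norm{a_{t-1}}^2)$, so naive iteration threatens a divergent product. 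What saves the argument is recognizing that, with the uniform bound $\norm{a_\ell}^2\le 1/(p\Delta)$, the sequence of inflated parameters admits the explicit closed form above and the corresponding normalization factors telescope exactly; this is what reveals the underlying $\chi_1^2$-type structure despite the adaptive choice of $\{x_\ell\}$.
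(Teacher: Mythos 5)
Your proposal is correct, but it takes a genuinely different route from the paper. The paper treats $\hth_t=\sum_{\ell=1}^{t-1}\xi_\ell$ as a vector-valued martingale, verifies the light-tail condition $\E(e^{p\Delta\norm{\xi_\ell}^2/4}\mid\cG_\ell)\le\sqrt2\le e$ from the conditional variance bound $x_\ell^\sT\Sigma_\ell^2x_\ell/(\sigma^2+x_\ell^\sT\Sigma_\ell x_\ell)\le 1/(p\Delta)$, and then simply invokes Theorem 2.1 of Juditsky--Nemirovski (large deviations of martingales in 2-smooth spaces) to get $\P(\norm{\hth_t}\ge\sqrt{8(t-1)/(p\Delta)}(1+\nu))\le e^{-\nu^2/3}$, which is the lemma after the shift $\nu\mapsto\nu-1$. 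You instead exploit the full conditional Gaussianity of the innovations: the one-step identity for $\norm{\hth_t}^2$, the exact Gaussian MGF formula plus Cauchy--Schwarz, and the telescoping of the inflated parameters $\lambda^{(k)}=\lambda/(1-2k\lambda/(p\Delta))$ give the clean bound $\E[e^{\lambda\norm{\hth_t}^2}]\le(1-2\lambda(t-1)/(p\Delta))^{-1/2}$, and your Chernoff step then yields $2\sqrt{2e}\,\nu e^{-4\nu^2}$, which indeed dominates $e^{-(\nu-1)^2/3}$ for $\nu\ge1$ (the only range in which the lemma is meaningful and used, since the paper integrates it over $\nu\ge\alpha>2$). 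I checked the delicate points: $a_\ell$ is $\cG_\ell$-measurable with $\norm{a_\ell}^2\le1/(p\Delta)$, the one-step bound is monotone in $\norm{a_\ell}^2$ so the uniform substitution is legitimate, each $\lambda^{(k)}$ stays below $p\Delta/2$ on the stated range of $\lambda$, and the final elementary inequality holds at $\nu=1$ and a fortiori beyond. What your approach buys is self-containedness and a quantitatively much stronger tail (exponent $-4\nu^2$ rather than $-(\nu-1)^2/3$); what the paper's route buys is brevity and robustness, since the external martingale inequality only needs the sub-Gaussian light-tail condition rather than exact conditional Gaussianity.
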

\begin{proof}
Note that $\htheta_t$ is a (vector-valued) martingale. The associated
difference sequence given by (cf. \myeqref{eqn:indform})
\begin{align*}
	\xi_{t} &= \frac{\<v_t, x_t\> + z_t}{\sigma^2}\Sigma_{t+1} x_t .
\end{align*}
Note that $\htheta_t = \sum_{\ell = 1}^{t-1} \xi_\ell$. 
We have that $\E(\xi_t\vert \cF_{t-1}) = 0$. 
Then conditionally on $\cG_{t}$, $\norm{\xi_t} = \vert w_t \vert %
\frac{\Sigma_{t+1}x_t}{\norm{\Sigma_{t+1}x_t}}$, where $w_t \equiv %
\frac{\<v_t, x_t\> + z_t}{\sigma^2}\norm{\Sigma_{t+1}x_t}$ is Gaussian 
with variance given by:

\begin{align*}
	\Var(w_t \vert \cG_t) &= \frac{\sigma^2 + x_t^\sT \Sigma_t x_t}{\sigma^4}x_t^\sT\Sigma^2_{t+1}x_t \\
	&= \frac{x_t^\sT \Sigma_t^2x_t}{\sigma^2 + x_t^\sT\Sigma_t x_t} \\ 
	&\le \frac{1}{p\Delta},
\end{align*}
since $0 \mle \Sigma_t \mle \id/p$ and $\norm{x_t}\le 1$. Thus, we have
the following ``light-tail'' condition on $\xi_t$:
\begin{align*}
	\E(e^{\lambda\norm{\xi_t}^2}\vert \cG_t) &\le \left( 1 - \frac{2\lambda}{p\Delta}\right)^{-1/2}.  
\end{align*}
Using $\lambda = p\Delta/4$, we obtain:
\begin{align*}
	\E(e^{p \Delta\norm{\xi_t}^2 /4}\vert \cG_t) &\le \sqrt{2} \le e. 
\end{align*}
Now using Theorem 2.1 in \cite{JudNem08} we obtain that: 
\begin{align*}
	\P\left(\norm{\htheta_t} \ge \sqrt\frac{8(t-1)}{p\Delta} (1+ \nu)\right) &\le e^{-\nu^2/3},
\end{align*}
which implies the lemma.
\end{proof}

We can now prove Lemma \ref{lem:1stmom}. We have:
\begin{align}
	\E[\norm{\hth_t}{}^2] &= \E\bigl[\norm{\hth_t{}}{}^2 \ind_{\norm{\hth_t{}}{}^2 \leq a} \bigr] %
	+ \E \bigl[\norm{\hth_t{}}{}^2 \ind_{\norm{\hth_t{}}{}^2 \geq a} \bigr] \nonumber \\
	&\leq \sqrt{a} \E[\norm{\hth_t}{}] + \int_{a}^{\infty} \P(\norm{\hth_t}{}^2 \geq y) \mathrm{d}y. 
	\label{eqn:1stmom1}
\end{align}
Here we use the fact that $\norm{\hth_t}{}$ is a positive random variable.
Employing Lemma \ref{lem:subgauss} to bound the second term:
\begin{align*}
	\int_a^{\infty} \P(\norm{\hth_t}{}^2 \geq y) \mathrm{d}y &\leq %
	\frac{8(t-1)}{p\Delta}\int_{\alpha}^\infty 2\nu e^{-(\nu-1)^2/3} \mathrm{d} \nu, \\
	&= \frac{8(t-1)}{p\Delta} \left( \int_\alpha ^\infty 2(\nu-1) e^{-(\nu-1)^2/3}\d\nu %
	+ 2\int_\alpha^\infty e^{-(\nu - 1)^2/3} \d\nu \right) \\
	&\le \frac{8(t-1)}{p\Delta}\frac{3\alpha}{\alpha-1} e^{-(\alpha-1)^2/3},
\end{align*}
where we define $\alpha = \sqrt{ap\Delta/8(t-1)}$.  
Using this and the result of Lemma \ref{lem:2ndmom} in \myeqref{eqn:1stmom1}
\begin{align*}
	\E{\norm{\htheta_t}{}} &\ge \left(\frac{C(\gamma, \Delta)}{\alpha}\sqrt\frac{\Delta}{8} %
	- \frac{3\sqrt{8}}{(\alpha-1)\sqrt\Delta}e^{-(\alpha-1)^2/3}\right) \sqrt\frac{t-1}{p} \\
	&\ge \left(\frac{C(\gamma, \Delta)}{\alpha}\sqrt\frac{\Delta}{8} %
	- \frac{6}{\alpha}\sqrt\frac{8}{\Delta}e^{-(\alpha-1)^2/3}\right) \sqrt\frac{t-1}{p},
\end{align*}
where the last inequality holds when $\alpha \ge 2$. Using 
$\alpha(\gamma, \Delta) = 1 + [3\log( 96/\Delta C(\gamma, \Delta))]^{1/2} >2$, 
the second term in leading constant is half that of the first, and we get 
the desired result.

\section{Proof of Theorem \ref{thm:optrewlar}}\label{sec:Proof2}

We now consider the large time horizon of $t > p\Delta$ for strategy
\BallStrategy, assuming the special case $\cX_p=\Ball(1)$. 
Throughout,  we will adopt 
the notation $\bbeta_t^2 = 1-\beta_t^2$.
To begin, we
bound the mean squared error in estimating $\theta$ using the following

\begin{lemma}[Upper bound on Squared Error]
	Under the conditions of Theorem \ref{thm:optrewlar} we have $\forall \, t \ge p\Delta + 1$:
	\begin{align*}
		\E(\Tr(\Sigma_t)) &\leq C_4(\Delta) \sqrt\frac{p}{t},
	\end{align*}
	where $C_4(\Delta) \equiv 3(\Delta+1)/\sqrt{\Delta}$.
	\label{lem:tracebnd}
\end{lemma}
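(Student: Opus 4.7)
My approach is to derive and solve a one-step recursion for $T_t \equiv \E[\Tr(\Sigma_t)]$, using the built-in isotropic exploration of \BallStrategy. Applying Sherman-Morrison to the rank-one update $\Sigma_{t+1}^{-1} = \Sigma_t^{-1} + \sigma^{-2} x_t x_t^\sT$ yields
\begin{align*}
\Tr(\Sigma_{t+1}) = \Tr(\Sigma_t) - \frac{x_t^\sT \Sigma_t^2 x_t}{\sigma^2 + x_t^\sT \Sigma_t x_t}.
\end{align*}
Since $\Sigma_t \mle \id_p/p$ and $\|x_t\| \leq 1$, the denominator is bounded above by $\sigma^2 + 1/p = (\Delta+1)/p$.

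The key step is a matching lower bound on the conditional expectation of the numerator. Conditional on $\cF_{t-1}$, the quantities $\tilde{x}_t$ and $\proj_t$ are determined while $u_t$ is uniform on the unit sphere and independent, so $\E[u_t u_t^\sT] = \id_p/p$. Expanding $x_t x_t^\sT = (\bbeta_t \tilde{x}_t + \beta_t \proj_t u_t)(\bbeta_t \tilde{x}_t + \beta_t \proj_t u_t)^\sT$, the cross terms vanish in expectation and I obtain
\begin{align*}
\E[x_t x_t^\sT \mid \cF_{t-1}] = \bbeta_t^2\, \tilde{x}_t \tilde{x}_t^\sT + (\beta_t^2/p)\, \proj_t \, \mge \, (\beta_t^2/p)\, \id_p,
\end{align*}
where the psd inequality uses $\proj_t + \tilde{x}_t \tilde{x}_t^\sT = \id_p$ together with the elementary check $\bbeta_t^2 = 1 - \beta_t^2 \geq \beta_t^2/p$, valid since $\beta_t^2 \leq 2/3$ and $p \geq 2$. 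Consequently $\E[x_t^\sT \Sigma_t^2 x_t \mid \cF_{t-1}] = \Tr(\Sigma_t^2\, \E[x_t x_t^\sT \mid \cF_{t-1}]) \geq (\beta_t^2/p)\Tr(\Sigma_t^2)$.

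Combining the two bounds, invoking Cauchy-Schwarz in the form $\Tr(\Sigma_t^2) \geq \Tr(\Sigma_t)^2/p$, and then taking outer expectation with $\E[X^2] \geq (\E X)^2$, I obtain the scalar recursion
\begin{align*}
T_{t+1} \leq T_t - \frac{\beta_t^2}{p(\Delta+1)}\, T_t^2.
\end{align*}
This forces $T_{t+1} \leq T_t$, so inverting gives $1/T_{t+1} - 1/T_t \geq \beta_t^2/(p(\Delta+1))$, and telescoping from $T_1 = 1$ yields $T_t \leq p(\Delta+1) / \sum_{\ell=1}^{t-1} \beta_\ell^2$. A direct integral estimate with $\beta_\ell^2 = (2/3)\min(p\Delta/\ell,\,1)^{1/2}$ then gives $\sum_{\ell=1}^{t-1} \beta_\ell^2 \geq c\,\sqrt{p\Delta t}$ for some explicit constant $c$ whenever $t \geq p\Delta + 1$, producing the claimed bound with $C_4(\Delta) = 3(\Delta+1)/\sqrt{\Delta}$.

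The main technical obstacle is verifying the lower bound $\E[x_t x_t^\sT \mid \cF_{t-1}] \mge (\beta_t^2/p)\id_p$: since exploration is confined to the orthogonal complement $\proj_t$ of $\hth_t$, the conditional covariance appears to miss the $\tilde{x}_t$-direction, and one must rely on the deterministic greedy component $\bbeta_t \tilde{x}_t$ to fill in that direction. The check that this component is strong enough depends sharply on the schedule $\beta_t^2 \leq 2/3$ chosen by \BallStrategy. A minor but fiddly point is computing $\sum_\ell \beta_\ell^2$ across the transition at $\ell = p\Delta$, which requires two separate integral estimates for the constant and the $\ell^{-1/2}$ portions of the schedule.
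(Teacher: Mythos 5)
Your derivation of the recursion is exactly the paper's: same Sherman--Morrison identity, same bound $\sigma^2+x_t^\sT\Sigma_t x_t\le(\Delta+1)/p$, and the same use of the isotropic exploration component, writing $\E[x_tx_t^\sT\mid\cF_{t-1}]=\bbeta_t^2\xtil_t\xtil_t^\sT+(\beta_t^2/p)\proj_t\mge(\beta_t^2/p)\id_p$ via $\bbeta_t^2\ge\beta_t^2/p$ (the paper performs the same step on the scalar quadratic form and drops the $(\bbeta_t^2-\beta_t^2/p)$ term), followed by the same two applications of Cauchy--Schwarz to reach $T_{t+1}\le T_t-\frac{\beta_t^2}{p(\Delta+1)}T_t^2$. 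Where you genuinely diverge is in solving this recursion: the paper restricts to $t>p\Delta$, substitutes $\beta_t^2=\frac23\sqrt{p\Delta/t}$, and bounds the sequence by the solution of a continuous-time ODE started at $t_0=p\Delta$ with $y(t_0)=1$, using a monotonicity-plus-induction comparison; you instead invert the recursion, $1/T_{t+1}\ge 1/T_t+\beta_t^2/(p(\Delta+1))$, and telescope from $t=1$ across both phases of the $\beta$ schedule. Your route is arguably cleaner (no ODE comparison, no monotonicity argument), at the price of two small checks you should make explicit: the inversion $1/(1-x)\ge 1+x$ needs $a_tT_t<1$, which follows from $\Tr(\Sigma_t)\le 1$ and $a_t\le 2/(3p(\Delta+1))$; and the final constant requires $\sum_{\ell=1}^{t-1}\beta_\ell^2\ge\frac13\sqrt{p\Delta t}$ for $t\ge p\Delta+1$, which does hold (the constant-phase contribution $\frac23 p\Delta$ plus the integral estimate $\frac43\sqrt{p\Delta}\bigl(\sqrt t-\sqrt{p\Delta+1}\bigr)$ suffices, using $p\Delta\ge 2$), so your claimed $C_4(\Delta)=3(\Delta+1)/\sqrt\Delta$ is recovered. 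So the proposal is correct and matches the paper in its essential mechanism, differing only in the (equally valid) resolution of the scalar recursion.
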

\begin{proof}
As $\Sigma_t = (\Sigma_{t-1}^{-1} + \frac{1}{\sigma^2}x_tx_t^\sT)^{-1}$, 
we use the inversion lemma to get:
\begin{align*}
	\Tr(\Sigma_t) &= \Tr(\Sigma_{t-1}) - \frac{x_t^\sT\Sigma_{t-1}^2x_t}{\sigma^2 + x_t^\sT\Sigma_{t-1}x_t} \\
	&\leq \Tr(\Sigma_{t-1}) - \frac{p}{(\Delta + 1)}x_t^\sT\Sigma_{t-1}^2x_t, 
\end{align*}
where the inequality follows from $\Sigma_{t-1} \mle \id_p/p$ and 
$\norm{x_t}^2 \le 1$ for each $\ell$. Using $x_t = \bbeta_t 
\frac{\hth_t}{\norm{\hth_t}} + \beta_t \proj_t u_t$ and taking 
expectations on either side, we obtain:
\begin{align*}
	\E(\Tr(\Sigma_t)) &\leq \E(\Tr(\Sigma_{t-1})) - \frac{p}{\Delta+1}%
	\left[ \left( \bbeta_t^2 - \frac{\beta_t^2}{p} \right)\frac{\htheta_t^\sT \Sigma_{t-1}^2\htheta_t}{\norm{\htheta_t}^2}%
	+ \frac{\beta_t^2}{p}\E(\Tr(\Sigma_{t-1}^2)) \right] \\
	&\le \E(\Tr(\Sigma_{t-1})) - \frac{\beta_t^2}{\Delta + 1} \E(\Tr(\Sigma_{t-1}^2)),
\end{align*}
where we used $\bbeta_t^2 - \beta_t^2/p \ge 0$. This follows because
$\beta_t^2 \le 2/3 \le p/(p+1)$ when $t \ge p\Delta$ and $p \ge  2$. 
Employing Cauchy-Schwartz twice and using substituting for $\beta_t^2$
we get the following recursion in $\E(\Tr(\Sigma_t))$:
\begin{align}
	\E(\Tr(\Sigma_t)) \leq \E(\Tr(\Sigma_{t-1})) - \frac{2\sqrt\Delta}{3(\Delta +1)} \frac{1}{\sqrt{pt}} [\E(\Tr(\Sigma_{t-1}))]^2.
	\label{ineq:mainrec}
\end{align}
 The function $f(z) = z - z^2/b$ is increasing $z$ when 
$z \in (0, b/2)$. For the recursion above:
\begin{align*}
	b = b(t) &= \frac{3}{2}\sqrt\frac{ pt}{\Delta}(\Delta+1)  \\
	&> p(\Delta+1) \\
	&\geq 4, 
\end{align*}
since $p\Delta\geq 2$ and $p\geq 2$.  Also, we know that $\Sigma_t 
\mle \id_p/p$ and hence $\Tr(\Sigma_t) \leq 1$ 
with probability 1 and that $\E(\Tr(\Sigma_t))$ is decreasing in 
$t$. Thus the right hand side of the 
recursion is increasing in its argument. A standard induction 
argument then implies that $\E(\Tr(\Sigma_t))$ is bounded pointwise
by the solution to the following equation:
\begin{align*}
	y(t) &= y(t_0) - c\int_{t_0}^t \frac{y^2(s)}{\sqrt s} \mathrm{d}s, 
\end{align*}
with the initial condition $t_0 = p\Delta, \; y(t_0) = 1$, where 
$c = 2\sqrt{\Delta}/3 (\Delta+1)\sqrt p$. The solution is 
explicitly computed to yield:
\begin{align*}
	\E(\Tr(\Sigma_t)) \le \left[1+\frac{c'}{2}\left(\sqrt\frac{t}{p} - \sqrt{\Delta}\right)\right]^{-1}, 
\end{align*}
where $c' = c\sqrt{p} = 2\sqrt\Delta/3(\Delta+1)$. Since
the constant term is always positive, we can remove it and obtain the 
required result.

\end{proof}

We can now prove the following result: 
\begin{lemma}[]
	For all $t > p\Delta$, under the conditions of Theorem \ref{thm:optrewlar}:
	\begin{align*}
		\E\left[ \theta^\sT\left( \frac{\theta}{\norm\theta} - \frac{\hth_t}{\norm{\hth_t}} \right) \right] %
		&\leq 12(\Delta+1)\sqrt\frac{e}{\Delta}  \left(\frac{p}{t}\right)^{1/2 - 1/2(p+2)}.
	\end{align*}
	
	\label{lem:splitbnd}
\end{lemma}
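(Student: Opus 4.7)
The plan is to reduce the expression via the tower property to $\E\norm{\theta} - \E\norm{\hth_t}$ and then control this difference by combining Jensen's inequality with a small-ball probability bound on $\norm{\hth_t}$. Since $\hth_t$ is $\cF_{t-1}$-measurable and $\E[\theta \mid \cF_{t-1}] = \hth_t$, the tower property gives $\E\<\theta, \hth_t/\norm{\hth_t}\> = \E\norm{\hth_t}$, so the left-hand side equals $\E\norm\theta - \E\norm{\hth_t}$. Writing $H = \norm{\hth_t}$ and $S = \Tr(\Sigma_t)$, Jensen applied conditionally on $\cF_{t-1}$ (using that $\theta \mid \cF_{t-1} \sim \normal(\hth_t, \Sigma_t)$, hence $\E[\norm\theta^2 \mid \cF_{t-1}] = H^2 + S$) yields
\begin{align*}
\E\norm\theta - \E\norm{\hth_t} \leq \E\bigl[\sqrt{H^2 + S} - H\bigr].
\end{align*}

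The next step is to exploit the two complementary inequalities $\sqrt{H^2+S}-H \leq S/(2H)$ (tight when $H$ is large) and $\sqrt{H^2+S}-H \leq \sqrt{S}$ (always valid). I would introduce a threshold $\eta > 0$ and split the expectation. On $\{H \geq \eta\}$ the first inequality yields a contribution at most $\E[S]/(2\eta) \leq C_4(\Delta)\sqrt{p/t}/(2\eta)$ by Lemma \ref{lem:tracebnd}. On $\{H < \eta\}$, using $\sqrt{H^2+S} - H \leq \eta + \sqrt{S}$ and Cauchy--Schwarz, the contribution is at most $\eta\,\prob(H < \eta) + \sqrt{\E S}\sqrt{\prob(H < \eta)}$.

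The crux of the argument, and the main obstacle, is a small-ball bound of the form $\prob(\norm{\hth_t} < \eta) \leq c_p\,\eta^p$. The key input is that conditional on $\cF_{t-1}$, $\theta \sim \normal(\hth_t,\Sigma_t)$, and a deconvolution/Fourier argument (using $\widehat{p}_\theta = \widehat\phi_{\Sigma_t}\cdot\widehat{p}_{\hth_t}$) suggests $\hth_t$ has Gaussian-like covariance $I/p - \E\Sigma_t$, giving a density at the origin of order $(p/2\pi)^{p/2}$; together with the volume of the $p$-ball $\pi^{p/2}\eta^p/\Gamma(p/2+1)$, this yields $\prob(H < \eta) \lesssim \eta^p(pe/2)^{p/2}/\sqrt{\pi p}$ after Stirling. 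The subtlety, which is the hardest point, is that the arms in \BallStrategy are adaptive, so $X$ and $\theta$ are not independent; I expect this can be handled by working with the filtering (innovations) representation of $\hth_t$ as a sum of Gaussian increments, or by a direct argument leveraging the rotational symmetry of \BallStrategy and the marginal Gaussianity of the innovations.

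With this small-ball bound in hand, optimizing the threshold $\eta$ balances $\E S/\eta \asymp \eta^{p+1}c_p$, giving $\eta \asymp (\E S / c_p)^{1/(p+2)}$, and plugging back gives
\begin{align*}
\E\norm\theta - \E\norm{\hth_t} \lesssim (\E S)^{(p+1)/(p+2)}\, c_p^{1/(p+2)} \leq C\cdot(p/t)^{(p+1)/(2(p+2))},
\end{align*}
since Stirling forces $c_p^{1/(p+2)} \to \sqrt e$ as $p$ grows, which explains the factor $\sqrt{e/\Delta}$ in the target constant. Combining with Lemma \ref{lem:tracebnd} and using $(p+1)/(2(p+2)) = 1/2 - 1/(2(p+2))$ yields the stated inequality, and bookkeeping of the universal constants produces the prefactor $12(\Delta+1)\sqrt{e/\Delta}$.
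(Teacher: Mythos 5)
Your reduction via the tower property ($\E\<\theta,\hth_t/\norm{\hth_t}\> = \E\norm{\hth_t}$, hence the left-hand side equals $\E\norm\theta-\E\norm{\hth_t}\le \E[\sqrt{H^2+S}-H]$ with $H=\norm{\hth_t}$, $S=\Tr\Sigma_t$) is correct, and the large-$H$ branch combined with Lemma \ref{lem:tracebnd} is fine. But the proof has a genuine gap exactly where you flag it: the small-ball bound $\P(\norm{\hth_t}<\eta)\lesssim c_p\,\eta^p$ is asserted, not proved, and the sketch you give does not establish it. The Fourier/deconvolution identity $\widehat{p}_\theta=\widehat{\phi}_{\Sigma_t}\cdot\widehat{p}_{\hth_t}$ presupposes that $\hth_t$ and the residual $v_t=\theta-\hth_t$ are independent with a \emph{fixed} Gaussian law for $v_t$; here $\Sigma_t$ is random, $\cF_{t-1}$-measurable, and correlated with $\hth_t$ (both are functions of the adaptively chosen arms), so no such factorization of the marginal holds. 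Moreover, even granting it, knowing $\Cov(\hth_t)=\id_p/p-\E\Sigma_t$ gives no pointwise density bound at the origin (deconvolution divides by $\widehat{\phi}_{\Sigma_t}$, it does not bound $p_{\hth_t}(0)$), so the claimed $c_p\approx(pe/2)^{p/2}$ does not follow. Since the entire $t^{-1/2+\omega(p)}$ rate and the $\sqrt{e}$ in the constant hinge on this anti-concentration estimate, the argument as written is incomplete.

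The paper sidesteps precisely this difficulty by thresholding on $\norm\theta$ rather than on $\norm{\hth_t}$: it splits the expectation on $\{\norm\theta<\eps\}$ versus $\{\norm\theta\ge\eps\}$, bounds the first piece by $2\eps\,\P(\norm\theta\le\eps)\le 2\eps^{p+1}e^{p/2}$ using only the \emph{prior} $\theta\sim\normal(0,\id_p/p)$ (whose small-ball probability is elementary), and bounds the second piece by $\frac{2}{\eps}\E\norm{\theta-\hth_t}^2\le\frac{2}{\eps}\E\Tr\Sigma_t$ via the pointwise inequality $\theta^\sT\bigl(\theta/\norm\theta-\hth_t/\norm{\hth_t}\bigr)\le 2\norm{\theta-\hth_t}^2/\norm\theta$ (Lemmas 3.5--3.6 of \cite{RusTsi10}), then optimizes $\eps$ to get $4\bigl(C_4(\Delta)e^{1/2}\sqrt{p/t}\bigr)^{1-1/(p+2)}$. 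If you want to rescue your route, replace the event $\{H<\eta\}$ by $\{\norm\theta<\eps\}$ (or prove an anti-concentration bound for $\hth_t$ that is robust to the adaptive design, which is a nontrivial task); as it stands, the small-ball claim for $\hth_t$ is the missing ingredient and the heuristic offered for it would fail.
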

\begin{proof}
Using the linearity of expectation:	
\begin{multline} \label{ineq:regrbnd1}
	\E\left[\theta^\sT\left( \frac{\theta}{\norm\theta} - \frac{\hth_t}{\norm{\hth_t}}\right)\right] %
	\leq \E\left[\theta^\sT\left( \frac{\theta}{\norm\theta} - \frac{\hth_t}{\norm{\hth_t}}\right)\ind(\norm{\theta} < \eps )\right] %
	 + \E\left[\theta^\sT\left( \frac{\theta}{\norm\theta} - \frac{\hth_t}{\norm{\hth_t}}\right) \ind(\norm{\theta}\geq \eps) \right].
	\end{multline}
We bound the first term as follows:
\begin{align*}
	\E\left[\theta^\sT\left( \frac{\theta}{\norm\theta} - \frac{\hth_t}{\norm{\hth_t}}\right)\ind(\norm{\theta} < \eps )\right] %
	&\leq \E\left[ \norm{\theta}\left\lVert\frac{\theta}{\norm\theta} - \frac{\hth_t}{\norm{\hth_t}} \right\rVert \ind(\norm\theta \leq \eps )\right] \\
	&\leq 2\eps \P(\norm\theta \leq \eps) \\
	&\leq 2\eps^{p+1}e^{p/2}.
\end{align*}
The first inequality is Cauchy-Schwartz, the second follows from bounds on the norm
of either vectors while the third is a standard Chernoff bound computation using the
fact that $\theta \sim \normal(0, \id_p/p)$. 
The second term can be bounded as follows:
\begin{align*}
	\E\left[\theta^\sT\left( \frac{\theta}{\norm\theta} - \frac{\hth_t}{\norm{\hth_t}}\right) \ind(\norm{\theta}\geq \eps) \right] %
	&\leq \E\left(\frac{2\norm{\theta - \hth_t}^2}{\norm{\theta}} \ind(\norm{\theta} \geq \eps)\right) \\
	&\leq \frac{2}{\eps}\E(\norm{\theta - \hth_t}^2) \\
	&\leq \frac{2}{\eps}\E\left(\Tr\Sigma_t\right).
\end{align*}
The first inequality follows from Lemmas 3.5 and 3.6 of \cite{RusTsi10}, 
the second follows from the fact that $\norm{\theta -\hth_t}^2$ is 
nonnegative and the indicator is used. Combining the bounds above and 
Lemma \ref{lem:tracebnd} we get:
\begin{align*}
	\E\left[\theta^\sT\left( \frac{\theta}{\norm\theta} - \frac{\hth_t}{\norm{\hth_t}}\right)\right] 
	&\leq 2\eps^{p+1}e^{p/2} + \frac{2C_4(\Delta)}{\eps}\sqrt\frac{p}{t}.
\end{align*}
Optimizing over $\eps$ we obtain:

\begin{align*}
	\E\left[\theta^\sT\left( \frac{\theta}{\norm\theta} - \frac{\hth_t}{\norm{\hth_t}} \right) \right] %
	&\leq 4\left(C_4(\Delta)e^{1/2} \sqrt\frac{p}{t}\right)^{1 - 1/(p+2)} \\
	&\leq 4\,e^{1/2}C_4(\Delta) \left(\frac{p}{t}\right)^{1/2 - 1/2(p+2)} .
\end{align*}

\end{proof}
We using Lemma \ref{lem:splitbnd} we can now prove Theorem \ref{thm:optrewlar}
for the large time horizon. Let $\rho_t$ denote the expected regret incurred by \OurStrategy at time
$t > p\Delta$. By definition, we write it as:
\begin{align*}
	\rho_t &= \E\left[{\theta^\sT\left(\frac{\theta}{\norm{\theta}{}} - \bbeta_t\frac{\hth_t}{\norm{\hth_t}} - \beta_t \proj_t u_t\right)}\right] \\
 	&= \E\left[ \theta^\sT\left(\frac{\theta}{\norm{\theta}{}} - \bbeta_t\frac{\hth_t}{\norm{\hth_t}} \right)\right], 
\end{align*}
as $u_t$ is zero mean conditioned on past observations. We split the 
first term in two components to get:
\begin{align*}
	\rho_t \leq (1-\bbeta_t)\E\norm{\theta} + \bbeta_t \E\left[ \theta^\sT\left(\frac{\theta}{\norm{\theta}{}} - \frac{\hth_t}{\norm{\hth_t}{}}\right) \right]
\end{align*}

We know that $0 \le 1 - \bbeta_t \leq \beta_t^2 = \sqrt{4p\Delta/9t}$. We use this and the 
result of Lemma \ref{lem:splitbnd} to bound the right hand side above as:
\begin{align*}
	\rho_t \leq \frac{2}{3}\left(\frac{p\Delta}{t}\right)^{1/2} +  12(\Delta + 1)\sqrt\frac{e}{\Delta} \left(\frac{p}{t}\right)^{1/2 - \omega(p)}, 
\end{align*}
where we define $\omega(p) \equiv 1/(2(p+2))$. 
Summing over the relevant interval and bounding by the corresponding 
integrals, we obtain:
\begin{align*}
	\sum_{\ell = p\Delta+1}^{t} \rho_\ell %
	&\leq \frac{4\sqrt\Delta}{3}(pt)^{1/2} + 24(\Delta+1)\sqrt\frac{e}{\Delta} (pt)^{1/2 + \omega(p)} \\
	&\leq C_3(\Delta) (pt)^{1/2 + \omega(p)}, 
\end{align*}
where $C_3(\Delta) = 4\sqrt\Delta/3 + 24(\Delta+1)\sqrt e/\sqrt\Delta$ 
and $\omega(p) =1/2(p+2)$. We can use 
$C_3(\Delta) \equiv 70(\Delta+1)/\sqrt\Delta$ for simplicity. 

\section*{Acknowledgments}

This work was partially supported by the NSF CAREER award CCF-0743978, the NSF grant DMS-0806211, and the AFOSR grant
FA9550-10-1-0360.

\appendix

\section{Properties of the set of arms}
\label{app:Arms}

\subsection{Proof of Lemma \ref{lemma:ContainsBall}}
We let $\cX'_p = \partial\Ball(\rho/\sqrt{3})$, 
where $\partial S$ denotes the boundary of a set $S$. 
For each $x \in \cX'_p$ denote the projection orthogonal to
it by $\proj_x$. We use the distribution 
$\P_x(z)$ induced by:
\begin{align*}
	z &= x + \sqrt\frac{2}{3}\rho\,\proj_x u,
\end{align*}
where $u$ is chosen uniformly at random on the unit sphere. This 
distribution is in fact supported on $\Ball(\rho) \subseteq \cX_p$. 
Also, we have, for all $x \in \cX'_p$,  $\E_{{x}} (z) = x $. 
Computing the second moment: 
\begin{align*}
	\E_{x}(zz^\sT) &= \E\left( xx^\sT + \frac{2\rho^2}{3} \proj_x uu^\sT \proj_x \right) \\
	&= xx^\sT + \frac{2\rho^2}{3p} \proj_x \\
	&= \frac{2\rho^2}{3p} \id_p + \left( 1-\frac{2}{p} \right)xx^\sT \\
	&\mge \frac{2\rho^2}{3p}\id_p,
\end{align*}
where in the first equality we used linearity of expectation, 
and that the projection mapping is idempotent. This yields 
$\gamma = 2\rho^2/3$. Since $\cX'_p = \partial\Ball(\rho/\sqrt 3)$ we 
obtain $\kappa = \inf_{\{\theta : \norm{\theta} = 1\}} \sup_{\{x \in \cX'_p\}} \< \theta, x\> = \rho/\sqrt 3$. 
Thus this construction satisfies Assumption \ref{assm:1}. 
Note the fact that \BallStrategy is a special case of 
\OurStrategy follows from the fact that we can use $\rho=1$
above when $\cX_p = \Ball(1)$. 

\subsection{Proof of Proposition \ref{prop:HullContainsBall}}

Throughout we will denote by $\conv(S)$ the convex hull of set $S$,
and by $\cconv(S)$ its closure. Also, it is sufficient to consider
Assumption \ref{assm:1}.2 for $\|\theta\|=1$.

It is immediate to see that $\Ball(\kappa)\subseteq \cconv(\cX'_p)$
implies Assumption  \ref{assm:1}.2. Indeed 
\begin{align*}
\sup \big\{\<\theta,x\> :\;  x\in\cX'_p\big\}
&= \sup \big\{\<\theta,x\> :\;  x\in\conv(\cX'_p)\big\}\\
&=\max \big\{\<\theta,x\> :\;  x\in\cconv(\cX'_p)\big\}\\
&\ge \max \big\{\<\theta,x\> :\;  x\in\Ball(\kappa)\big\}\ge
\kappa\|\theta\|\, ,
\end{align*}
where the last inequality follows by taking $x=\kappa
\theta/\|\theta\|$.

In order to prove the converse, let 
\begin{align*}
\kappa_0\equiv \sup\Big\{\rho:\; \Ball(\rho)\in\conv(\cX'_p)\Big\}\, .
\end{align*}
We then have $\Ball(\kappa_0)\subseteq \cconv(\cX'_p)$. 
Assume by contradiction that $\kappa_0<\kappa$. Then there exists at
least one point $x_0$ on the boundary of $\cconv(\cX'_p)$ such that
$\|x_0\|=\kappa_0$ (else $\kappa_0$ would not be the
supremum). 

By the supporting hyperplane theorem, there exists a closed half space $\cH$
in $\reals^p$ such that $\cconv(\cX'_p)\subseteq \cH$ and $x_0$ is on
the boundary $\partial\cH$ of $\cH$. It follows that
$\Ball(\kappa_0)\subseteq \cH$ has well, and therefore $\partial\cH$
is tangent to the ball at $x_0$. Summarizing
\begin{align}
\cconv(\cX_p') \subseteq \cH \equiv\Big\{\, x\in\reals^p\,
:\<x,x_0\>\le \kappa_0\|x_0\|\,\Big\}\, .
\end{align}
By taking $\theta=x_0/\|x_0\|$, we then have, for any
$x\in\cconv(\cX'_p)$, $\<\theta,x\>\le\kappa_0<\kappa$, which is
in contradiction with Assumption \ref{assm:1}.2.

\subsection{Proof of Proposition \ref{prop:UniformCloud}}
\subsubsection{Proof of condition 1}
Choose $\cX'_p = \cX_p\cap\Ball(\rho)$. We first prove that 
$f(\theta) \equiv \max_{x \in \cX'_p} \< \theta, x\>$ is Lipschitz
continuous with constant $\rho$. Then, employing an $\upsilon$-net argument, 
we prove that this choice of $\cX'_p$ satisfies Assumption \ref{assm:1}.1
with high probability.

Let $f(\theta_i) = \<\theta_i, x_i\>$ for $i = 1, 2$. Without loss of
generality, assume $f(\theta_1)>f(\theta_2)$. We then have: 
\begin{align*}
|f(\theta_1) - f(\theta_2)| &= |\<\theta_1, x_1\> - \<\theta_2, x_2\>| \\
&=|\<\theta_1, x_1\> - \<\theta_2, x_1\> +\<\theta_2, x_1\> - \<\theta_2, x_2\>|\\
&\le |\<\theta_1 - \theta_2, x_1\>|\\
&\le \norm{x_1}\norm{\theta_1-\theta_2} \\
&\le \rho\norm{\theta_1 - \theta_2}, 
\end{align*}
where the first inequality follows since $x_2$ maximizes $\<\theta_2, x_2\>$, the
second is Cauchy-Schwarz and the third from the fact that $x_1 \in \cX_p\cap\Ball(\rho)$. 

Since $f(\theta) = \norm{\theta}f(\theta/\norm{\theta})$, it suffices to consider
$\theta$ on the unit sphere $S_p$. Suppose $\Upsilon$ is an $\upsilon$-net  of the unit sphere, i.e.
a maximal set of points that are separated from each other by at least $\upsilon$. We can bound 
$|\Upsilon|$ by a volume packing argument: consider balls of radius $\upsilon/2$ around every 
point in $\Upsilon$. Each of these is disjoint (by the property of an $\upsilon$-net) and, by the
triangle inequality, are all contained in a ball of radius $1+\upsilon/2$. The latter has a volume
of $(1+ 2\upsilon^\inv)^p$ times that of each of the smaller balls, thus yielding that 
$|\Upsilon| \le (1+ 2\upsilon^\inv)^p$.

Now, $|\cX'_p|$ is binomial with mean $M\rho^p$ and variance
$M\rho^p(1-\rho^p)$. Consider a single point $\theta \in \Upsilon$. Due to rotational
invariance we may assume $\theta = e_1$, the first canonical basis vector. Conditional
on the event $E_n=\{\omega:\; |\cX'_p| = n\}$, the arms in $\cX'_p$ are uniformly distributed in $\Ball(\rho)$. 
Thus we have (assuming $z>0$):
\begin{align}
	\P(\max_{x\in\cX'_p}\, \<x, e_1\> \le z\rho|E_n) &= \prod_{j = 1}^n \P(\<x_j, e_1\> \le z\rho|E_n) \nonumber \\
	&= \left(\P(\<x_1, e_1\> \le z
          \rho|x_1\in\Ball(\rho)\right)^n\\
	&= \left(\P(\<x_1, e_1\> \le z\right)^n\\
,\label{eqn:uc1}
\end{align}
since the $\<x_j, e_1\>$, $j\in\{1,\dots,n\}$ are iid, and the
conditional distribution of $x_1$ given $x\in\Ball(\rho)$ is the same as
the unconditional distribution of $\rho\,x$. Let $Y_1 \cdots Y_p \sim \normal(0, 1/2)$ be iid and 
$Z \sim \text{Exp}(1)$ be independent of the $Y_i$. Then by Theorem 1 of \cite{Barthe05} 
$\<x_1, e_1\>$ is distributed as $\rho Y_1/(\sum_{i=1}^p Y_i^2 + Z)^{1/2}$. By a 
standard Chernoff argument, $\P(\sum_{i = 2}^p Y_i^2 \ge 2(p-1)) \le \exp\{-c(p-1)\}$
where $c = (\log 2 - 1)/2$. Also, $\P(Z \ge p) = \exp(-p)$ and $\P(Y_1^2 \ge p ) \le  2\exp(-p)$.
This allows us the following bound:
\begin{align*}
	\P(\<x_1, e_1\> \le z) &= \P\left(  \frac{Y_1}{\sum_{i=1}^p Y_i^2 + Z}  \le z \right) \\
	&\le \nu(p) + (1- \nu(p))\P\left( \frac{Y_1}{\sqrt{4p-2}} \le z \Big\vert Y_1^2 \le p  \right),
\end{align*}
where $\nu(p) \equiv 3\exp(-p) + \exp(-c(p-1))$. We further simplify to obtain:
\begin{align*}
	\P(\<x_1, e_1\> \le z) &\le 1 - (1-\nu(p)) \P\left(\frac{Y_1}{\sqrt{4p-2}} \ge z \Big\vert Y_1^2 \le p\right) \\
	&\le 1 - (1-\nu(p))\left( {F_G(\sqrt{2p}) - F_G(z\sqrt{8p-4})} \right),
\end{align*}

and $F_G(\cdot)$ denotes the Gaussian 
cumulative distribution function.
Employing this in \myeqref{eqn:uc1}:
\begin{align*}
	\P(\max_{x\in\cX'_p} \<x, e_1\> \le z\rho|E_n) &\le \left[ 1 - (1-\nu(p))\left( {F_G(\sqrt{2p}) - F_G(z\sqrt{8p-4})} \right) \right]^n \\
	&\le \exp\left[ -n (1-\nu(p)) (F_G(\sqrt{2p}) - F_G(z\sqrt{8p-4})\right].
\end{align*}

For $p\ge6$, we have that $1-\nu(p) \ge 1/2$ and $F_G(\sqrt{2p}) - F_G(\sqrt{8p-4}) \ge 3^{-p}/2$.
Using this, substituting $z = 1/2$ and that $|\cX'_p| \ge M\rho^p/2$ with probability 
at least $1- \exp(-M\rho^p/8)$ we now have:
\begin{align*}
	\P(\max_{x\in \cX'_p} \<x, e_1\> \le \rho/2) \le \exp(-M\rho^p 3^{-p}/4) + \exp(-M\rho^p/8) 
\end{align*}
We may now union bound over $\Upsilon$ using rotational invariance 
to obtain:

\begin{align*}
	\P(\min_{\theta \in \Upsilon} \max_{x\in\cX'_p} \<x, \theta\> \le \rho/2) \le (1+2\upsilon^\inv)^p(\exp(-M\rho^p 3^{-p}/4) + \exp(-M\rho^p/8))
\end{align*} 
Using $\rho = 1/2$, $\upsilon = 1/2$, $M = 8^p$ and that $f(\theta)$ is
Lipschitz, we then obtain:
\begin{align*}
	\P(\min_{\norm{\theta} = 1} \max_{\cX'_p} \<x, \theta\> \le 1/4) &\le 5^p[\exp(-4^{p-1}/3^p) + \exp(-4^p/8)] \\
	&\le \exp(-p),
\end{align*}
when $p \ge 20$. 
%
%
\subsubsection{Proof of condition 2}
Fix radii $\rho$ and $\delta$ such that $\rho+\delta \le 1$. We choose
the $\cX_p'$ to be $\cX_p \cap \Ball(\rho)$. Consider a
point $x$ such that $\norm{x} \le 1- \delta$. We consider the events $E_i, D_i$:
\begin{align*}
	E_i &\equiv \{\nexists \textrm{ a distribution } \P_{x_i} \textrm{ satisfying %
	Assumption \ref{assm:1}.2} \} \\
	D_i &\equiv \{x_i \in \Ball(\rho)\}
\end{align*}
We now bound $\P(E_i | D_i)$. Within a distance $\delta$ around $x_i$,
there will be, in expectation, $M\delta^p$ arms (assuming the total number 
of points is $M+1$). Indeed the distribution of the number of arms within distance
$\delta$ around $x_i$ is binomial with mean $M\delta^p$ and variance $M\delta^p(1-\delta^p)$.

Conditional on the number of arms in $\Ball(\delta, x_i)$ being $n$, these arms are uniformly
distributed in $\Ball(\delta, x_i)$ and are independent of $x_i$. 
We will use $\prob_n$ to denote this conditional probability measure. 
Denote the
arms within distance $\delta$ from $x_i$ to be $v_1, v_2 \ldots v_n$. Define 
$u_j \equiv v_j - x_j$, $\bar u \equiv (\sum_{j=1}^n u_j)/n$ for all $j$
and $Q = (\sum_{j = 1}^n u_j u_j ^\sT)/n$. To construct
the probability distribution $\P_{x_j}$, we let the weight $w_j$ on the arm $v_j$ to be:
\begin{align*}
	w_j &= \frac{1}{n}\left(\frac{1 - u_j^\sT Q^{-1}\bar u}{1 - \bar u ^\sT Q^{-1} \bar u}\right)
\end{align*}
It is easy to check that these weights yield the correct
first moment, i.e. $\sum_{j=1}^n w_jv_j = x_i$. Before 
considering the second moment, we first show that $Q$ concentrates 
around its mean. It is straightforward to compute that
$\E(Q) = \E(u_1u_1^\sT) = \mu\id_p$, where $\mu = \delta^2/(p+2)$.
By the matrix Chernoff bound \cite{AW02,Tro10}, there exist $c>0$
such that:
\begin{align}\label{eqn:matcher}
	\P_n(\norm{Q^\inv} \ge \frac{2}{\mu}) &\le p\exp(-cn\mu/\delta^2), 
\end{align}
where $\norm{Q}$ denotes the operator norm and the probability 
is over the distribution of the $u_j$. 
We further have, for all $j$:
\begin{align}\label{eqn:Exbnd}
	w_j \ge \frac{1}{n}\left( 1-\norm{u_j}\norm{Q^\inv \bar u} \right)
	&\ge \frac{1}{n} \left( 1 - \delta \norm{Q^\inv}\norm{\bar u}\right).
\end{align}
Also, using Theorem 2.1 of \cite{JudNem08} we obtain that:
\begin{align*}
	\P_n(\norm{\bar u} \ge \delta/n^{1/4} ) &\le \exp\left\{-\frac{(n^{1/4} -1)^2}{2}\right\} \\
	&\le \exp(-n^{1/2}/4),
\end{align*}
for $n \ge 16$. 
Combining this with \myeqref{eqn:matcher} and continuing inequalities in
\myeqref{eqn:Exbnd}, we obtain, for all $j$,:
\begin{align}\label{eqn:wibnd}
	 w_j \ge \frac{1}{n} \left(1 - \frac{2(p+2)}{n^{1/4}}\right),
\end{align}
with probability at least $1 - \omega(n, p)$ where $\omega(n, p) = p\exp(-cn/2p)+ \exp(-n^{1/2})$.
We can now bound the second moment of $\P_x$:

\begin{align*}
	\sum_{j= 1}^n w_j v_j v_j^\sT &= \sum_{j = 1}^n w_j u_j u_j^\sT + xx^\sT \\
	&\mge \sum_{j=1}^{n} w_j u_j u_j^\sT \\
	&\mge \left(\frac{1}{2} - \frac{p+2}{n^{1/4}}\right) \frac{\delta^2}{p+2}\id_p
\end{align*}
where the last inequality holds with probability at least $ 1-\omega(n, p)$. Thus
we can obtain $\gamma = \delta^2/8$ for $n \ge [4(p+2)]^4 $. 

In addition, a standard Chernoff bound argument yields that the number of arms
in $\Ball(\delta, x_i)$ is at least $M\delta^p/2$ with probability at least
$1 - \exp(-M\delta^p/8)$. With this, we can bound $\P(E_i\vert D_i)$: 
\begin{align*}
	\P(E_i|D_i) \le \exp(-M\delta^p/8) + \omega(M\delta^p/2, p).
\end{align*}

The event $F$ that the uniform cloud 
does not satisfy Assumption \ref{assm:1}.2 can now be decomposed as follows:

\begin{align*}
	\P(F) &= \P\left( \bigcup_{i = 1}^M (E_i\cap D_i) \right) \\
	&\le \sum_{i = 1}^{M+1} \P(E_i | D_i)\P(D_i) \\
	&\le 2M\rho^p\{\exp(-M\delta^p/8) + \omega(M\delta^p/8, p)\}
\end{align*}

Choosing $\delta=\rho=1/2$, with $M = 8^p$, we get that the uniform cloud
satisfies Assumption \ref{assm:1}.2 with $\gamma \ge \delta^2/8 = 1/32$ with 
probability at least $ 1 - 2\cdot4^p[\exp(-4^p/8) + \omega(4^p/8, p)] \ge 1 - \exp(-p) $
when $p \ge 10$. 
Summarizing the proofs of both conditions we have, choosing the number of points
$M = 8^p$, the subset $\cX'_p = \cX_p \cap \Ball(1/2)$, we obtain constants
$\kappa = 1/4$ and $\gamma = 1/32$ with probability at least $1 - 2\exp(-p)$, provided
$p \ge 20$. 

\bibliographystyle{alpha}

\end{document}